\documentclass{article}
\usepackage{fancyhdr}

\usepackage{report}

\usepackage{soul}
\usepackage[utf8]{inputenc} 
\usepackage[T1]{fontenc}    
\usepackage{hyperref}       
\usepackage{url}            
\usepackage{booktabs}       
\usepackage{amsfonts}       
\usepackage{fancyhdr}       

\usepackage{nicefrac}       
\usepackage{microtype}      
\usepackage{graphicx}
\usepackage{pifont}
\usepackage{multirow}
\usepackage{CJKutf8}
\definecolor{darkmagenta}{rgb}{0.56, 0.0, 1.0}
\definecolor{softyellow}{rgb}{1.0, 0.92, 0.3} 
\definecolor{LightAquamarine}{rgb}{0.75, 1.0, 0.8} 
\definecolor{FireBrick}{RGB}{178,34,34}
\definecolor{MediumPurple}{RGB}{147,112,219}

\definecolor{uclablue}{rgb}{0.15, 0.45, 0.68}
\hypersetup{
    breaklinks,
    colorlinks=true,
    citecolor={darkmagenta},
    linkcolor={uclablue},
    urlcolor={uclablue}
}
\usepackage{wrapfig}
\usepackage{float}
\usepackage{subcaption}
\usepackage{placeins}
\usepackage{lipsum} 
\usepackage{tcolorbox}
\usepackage{amsmath}
\usepackage{amssymb}
\usepackage{utfsym}
\usepackage{fontawesome}
\usepackage{xspace}
\usepackage{enumitem}
\usepackage{multirow} 
\tcbuselibrary{breakable}
\usepackage{enumitem}
\usepackage{colortbl}
\usepackage{fancyhdr}

\usepackage{tcolorbox}
\usepackage{transparent}
\usepackage{hyperref}
\usepackage{url}
\usepackage{algorithm}
\usepackage{algorithmic}
\usepackage{booktabs}
\usepackage{graphicx}
\usepackage[table]{xcolor}
\usepackage{amsthm}
\newtheorem{lemma}{Lemma}
\usepackage{booktabs}
\usepackage{graphicx}

\let\cite\citep

\definecolor{njuPurple}{RGB}{220,205,230}     
\definecolor{njuPurpleLight}{RGB}{250,245,252}   

\newtcolorbox{abstractbox}{
    colback=njuPurpleLight,   
    colframe=njuPurple,       
    boxrule=1pt,              
    arc=4mm,                  
    left=8pt,                 
    right=8pt,                
    top=8pt,                  
    bottom=8pt,               
    opacityback=0.95
}

\title{Rethinking Multi-Agent Collaboration: When More Is Less}

\author{
\textbf{Yishuo Yuan$^{1}$},
\textbf{Yibo Wu$^{1}$},
\textbf{Yihan Zhang$^{2}$},
\textbf{Minyuan Sun$^{1}$},
\textbf{Shenliang Li$^{1}$},
\textbf{Xinkai Ma$^{1}$},
\textbf{Yifan Li$^{1}$},
\textbf{Jiaheng Liu$^{1, \dagger}$}
\\
\vspace{4mm}
{\normalsize $^1$ Nanjing University} \quad
{\normalsize $^2$ Shanghai Jiao Tong University} \\ 
\vspace{2mm}
\texttt{liujiaheng@nju.edu.cn} \\
}

\begin{document}

\maketitle
\let\oldthefootnote\thefootnote

\let\thefootnote\relax\footnotetext{$^\dagger$~Corresponding Author.}
\let\thefootnote\oldthefootnote

\begin{abstractbox}
\begin{center}
\textbf{\Large Abstract}
\end{center}
The rapid advancement of large language models and single-agent harnesses has reshaped the landscape of autonomous systems, raising a critical question of when multi-agent collaboration offers genuine value. As individual agent capabilities continue to scale, multi-agent collaboration faces diminishing returns while incurring growing context overhead. Through systematic analysis, we delineate the capability boundaries of multi-agent collaboration relative to single-agent alternatives, showing that it confers systematic benefits specifically in long-horizon tasks with sparse dependencies, while single-agent harnesses remain superior in tightly coupled, sequential workflows. Building on these insights, we propose SAIGE, a lightweight multi-agent collaboration mechanism based on Semantic-Aware Incremental Graph Evolution. SAIGE models collaboration as a dynamically evolving graph, where nodes are agent instances spawned on demand and edges encode semantic dependencies established through content-based information retrieval. Experiments on long-horizon, complex task benchmarks show that SAIGE achieves a favorable trade-off between context efficiency and task performance, and that scaling the agent pool or deepening the recursion level does not consistently improve outcomes. Our findings suggest that multi-agent superiority is bounded by task structure rather than universal, and that more agents do not necessarily make a system more intelligent.
\end{abstractbox}

\section{Introduction}
\label{sec:intro}

The rapid advancement of large language models (LLMs) has fundamentally reshaped the landscape of autonomous systems. As foundation models grow increasingly capable of handling long-horizon, complex tasks, the accompanying agent harnesses, such as Claude Code \citep{claudecode}, Codex \citep{codex}, and DeepSeek Harness \citep{deepseekharness}, have matured into highly sophisticated infrastructures. These single-agent systems now deliver robust, end-to-end execution in real-world scenarios, setting a formidable baseline for autonomy. In stark contrast, generic multi-agent collaboration frameworks often remain toy-level in practice, lingering on simplistic benchmarks \citep{eqbench, mmlupro, humaneval} and marginal performance gains \citep{goa, roma}. This widening gap compels a critical reexamination of when and why multi-agent collaboration offers genuine value over a well-equipped single-agent harness.

Existing work often attributes the primary advantage of multi-agent systems to context isolation, which partitions a task to prevent context pollution and context rot \citep{contextfilter, contextrot}. While intuitively appealing, this explanation remains largely qualitative and lacks a rigorous theoretical foundation. In this work, we aim to delineate the capability boundaries of multi-agent collaboration through graph theory. We formalize multi-agent collaboration by modeling task trajectories as dependency graphs, where subtasks are partitioned by bridge edges. This formalism reveals that multi-agent collaboration is not a universal panacea. Instead, it confers systematic benefits specifically in long-horizon tasks with sparse dependencies, where context isolation can be exploited without incurring prohibitive coordination overhead. Conversely, in tightly coupled, sequential workflows, single-agent harnesses remain superior.

Building on these theoretical insights, we propose SAIGE (Semantic-Aware Incremental Graph Evolution), an elegant and general multi-agent collaboration mechanism. SAIGE models collaboration as a dynamically evolving graph, where nodes are agent instances spawned on demand and edges encode semantic dependencies established through content-based information retrieval. Rather than committing to a static topology upfront, SAIGE grows the subtask tree incrementally as execution unfolds, grounding orchestration decisions in actual execution feedback. We empirically validate our framework on long-horizon, complex task benchmarks, showing that SAIGE achieves a favorable trade-off between context efficiency and task performance compared to existing generic multi-agent methods. Notably, our ablation study indicates that neither a larger agent pool nor a deeper recursion hierarchy consistently improves performance, suggesting that more agents do not necessarily make a system more intelligent. Our findings  offer a principled understanding of when collaboration provides meaningful advantage over single-agent execution, and suggest that indiscriminate multi-agent scaling warrants reconsideration.
\section{Related Work}
\subsection{Multi-Agent Collaboration}
\label{subsec:multi_agent_collaboration}

Multi-agent collaboration governs how agents coordinate, communicate, and align behaviors toward shared objectives, evolving from rigid predefined structures to adaptive, context-aware interactions. Role allocation ranges from static, pre-assigned responsibilities \citep{sheetagent, codedelegator} to dynamic selection or runtime instantiation based on task demands \citep{autogen, agentverse, roma}. Communication likewise spans explicit message passing via structured documents or natural language \citep{metagpt, talkhier, ecolang} and implicit inference from environmental cues \citep{generativeagents, latentmas}. Information flow further structures these interactions as sequential pipelines or parallel exploration \citep{hugginggpt, moa}, while interaction patterns ultimately manifest as cooperative or competitive dynamics \citep{proagent, gptbargaining}.

\subsection{Multi-Agent Topology}
\label{subsec:multi_agent_topology}

Multi-agent topology serves as a critical structural framework for governing agent execution, communication, and interactions. While early attempts often embedded structural organization implicitly \citep{chateval, autogen, dspy}, recent practices have explicitly represented multi-agent organizations as graphs \citep{dylan, gptswarm, macnet}. These topologies generally fall into three representative paradigms: centralized, distributed, and hybrid. Centralized topologies leverage a global coordinator for information aggregation and routing \citep{stackplanner, aop}; distributed topologies employ peer-to-peer communication and localized routing \citep{agentnet, symphony} to enhance autonomy and scalability; and hybrid topologies combine centralized strategic planning with decentralized execution \citep{chatdev, metagpt, agentorchestra}. Concretely, these orchestration strategies manifest as distinct structural morphologies, including chain \citep{chatdev, metagpt, l2mac}, star \citep{autogen, securitybot}, tree \citep{soa}, and general graph topologies \citep{macnet, goa}.
\section{Formalizing Multi-Agent Collaboration}
\label{sec:theory}

What is the core of multi-agent collaboration, and under what idealized conditions does it genuinely help? We take a deliberately idealized route: we ask what multi-agent collaboration is fundamentally doing to a task, and derive its benefit from that answer, leaving the gap between the idealized and the practical as a source of empirically testable predictions rather than a hidden assumption.

\subsection{Preliminaries}

\subsubsection{Trajectory as a Dependency Directed Acyclic Graph}

Consider a task trajectory of length \(T\): \(\tau = \{x_1, x_2, \ldots, x_T\}\), where each \(x_t = (o_t, a_t)\) denotes the interaction tuple at step \(t\), consisting of the new observation \(o_t\) received and the action \(a_t\) taken by the agent. Let \(\mathcal{H}_t = \{x_1, \ldots, x_{t-1}\}\) denote the historical context available before generating \(x_t\). This history grows monotonically with \(t\), and the agent's decisions may depend on arbitrary subsets of this history.

To characterize these dependencies formally, we define a value function \(Q_t(a_t \mid o_t, \mathcal{H}_t)\) that represents the expected return of taking action \(a_t\) given historical context \(\mathcal{H}_t\). For any pair of nodes \(x_i, x_t \in \tau\) with \(i < t\), we say that \(x_t\) has a direct dependency on \(x_i\) if removing the interaction tuple \(x_i\) from the historical context causes a significant drop in the action value at step \(t\):

\begin{equation}
(x_i, x_t) \in \mathcal{E} 
\iff 
Q_t(a_t \mid o_t, \mathcal{H}_t \setminus \{x_i\}) < Q_t(a_t \mid o_t, \mathcal{H}_t) - \delta,
\end{equation}

where \(\mathcal{H}_t \setminus \{x_i\}\) denotes the historical context with the contribution of \(x_i\) removed, and \(\delta > 0\) is a predefined threshold. Since all edges point forward in time, the trajectory naturally induces a directed acyclic graph \(\mathcal{G} = (\mathcal{V}, \mathcal{E})\), with node set \(\mathcal{V} = \tau\). Each directed edge \((x_i, x_t) \in \mathcal{E} \subseteq \mathcal{V} \times \mathcal{V}\) indicates that node \(x_t\) directly depends on node \(x_i\).

We further assume that a task trajectory contains no redundant interactions. Specifically, every node in the trajectory contributes to at least one subsequent decision, implying that each node has out-degree at least one in the dependency graph, with the terminal node serving as the unique sink. Under this assumption, the induced DAG \(\mathcal{G}\) is weakly connected. This assumption is mild in practice, as interactions that do not influence any future step can be discarded from the trajectory without affecting the task outcome.

\begin{figure}[t]
\centering
\includegraphics[width=\linewidth]{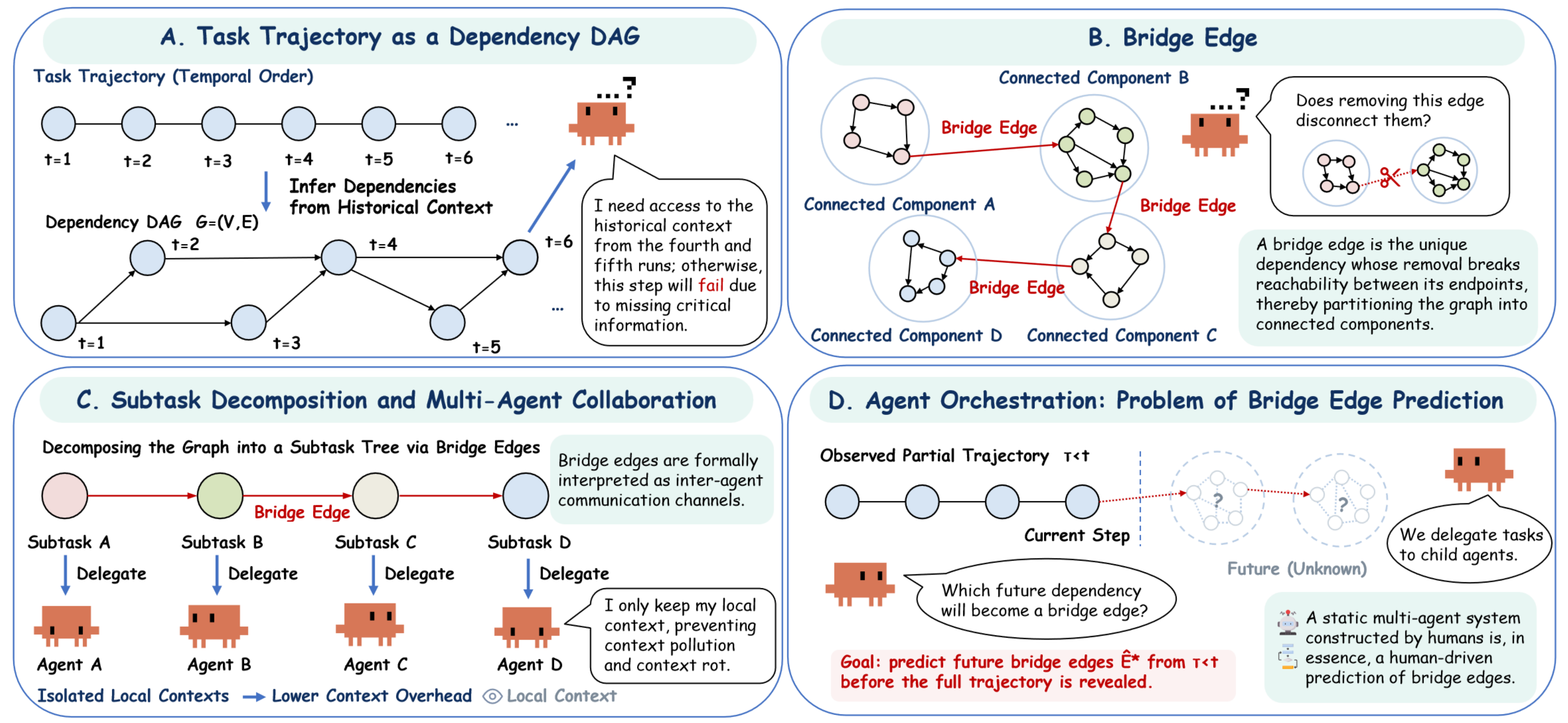}
\caption{Overview of the theoretical framework for multi-agent collaboration.}
\label{fig:theory}
\end{figure}

\subsubsection{Decomposing Trajectories into Subtasks via Bridge Edges}

Within the DAG \(\mathcal{G}\), certain edges serve as critical information channels connecting distinct regions of the graph. An edge \((x_i, x_t) \in \mathcal{E}\) is a \textbf{bridge edge} if its removal increases the number of weakly connected components of \(\mathcal{G}\).



Let \(\mathcal{E}^* \subseteq \mathcal{E}\) denote the set of all bridge edges. Removing all bridge edges from \(\mathcal{G}\) decomposes the graph into \(m\) weakly connected components \(\mathcal{G} \setminus \mathcal{E}^* = \mathcal{C}_1 \cup \cdots \cup \mathcal{C}_m\). The corresponding subtask trajectory is defined as the ordered sequence of nodes in \(\mathcal{C}_k\): \(\tau_{(k)} = \{x_{n_1}, x_{n_2}, \ldots, x_{n_{T_k}}\} \subseteq \tau\), where \(T_k = |\mathcal{C}_k|\), and the subtask trajectories collectively partition the original trajectory: \(\tau = \tau_{(1)} \cup \tau_{(2)} \cup \cdots \cup \tau_{(m)}\).

In multi-agent systems, this decomposition has a natural interpretation: each subtask \(\tau_{(k)}\) is assigned to an individual agent responsible for executing that segment of the trajectory. The bridge edges between components correspond to communication channels between the respective agents, enabling information flow across subtask boundaries when necessary.

Let \(\mathcal{V}_{\mathcal{S}} = \{\tau_{(1)}, \tau_{(2)}, \ldots, \tau_{(m)}\}\) denote the set of subtask trajectories. The directed edges between subtasks are induced by the bridge edges \(\mathcal{E}^*\). Since each bridge edge connects nodes from different components, this yields a well-defined relation:

\begin{equation}
(\tau_{(p)}, \tau_{(q)}) \in \mathcal{E}_{\mathcal{S}} 
\iff 
\exists (x_p, x_q) \in \mathcal{E}^* \text{ such that } x_p \in \tau_{(p)} \text{ and } x_q \in \tau_{(q)}.
\end{equation}

The pair \(\mathcal{T} = (\mathcal{V}_{\mathcal{S}}, \mathcal{E}_{\mathcal{S}})\) forms a \textbf{subtask tree} that captures information flow at the granularity of subtasks.

\subsubsection{Context Cost Reduction}

Let \( c(x) \) denote the non-negative computational overhead of including interaction tuple \( x \) in the historical context. For a subtask trajectory \( \tau_{(k)} = \{x_{n_1}, \ldots, x_{n_{T_k}}\} \), let \( \mathcal{B}_k \subseteq \mathcal{E}^* \) denote the set of bridge edges that enter \( \tau_{(k)} \), i.e.,
\(
\mathcal{B}_k = \{ (x_p, x_q) \in \mathcal{E}^* \mid x_q \in \tau_{(k)} \}.
\)
Let \( \operatorname{idx}(x, \tau) \) denote the index of node \( x \) within trajectory \( \tau \). The total context cost of executing subtask \( \tau_{(k)} \) is decomposed as
\begin{equation}
C(\tau_{(k)}) = C_I(\tau_{(k)}) + C_E(\tau_{(k)})
= \sum_{t=1}^{T_k} \sum_{i=1}^{t-1} c(x_{n_i})
+ \sum_{(x_p, x_q) \in \mathcal{B}_k} \sum_{i=\operatorname{idx}(x_q, \tau_{(k)})}^{T_k} c(x_p),
\label{eq:subtask_cost}
\end{equation}
where \( C_I(\tau_{(k)}) \) accounts for the cost of accumulating historical context from nodes within the same subtask, and \( C_E(\tau_{(k)}) \) accounts for the cost of pulling context \( x_p \) from predecessor subtasks via incoming bridge edges, starting from the position of the target node \( x_q \) through all subsequent steps.

For the original monolithic trajectory \( \tau \), there are no incoming bridge edges, so \( \mathcal{B} = \varnothing \), and the cost is simply:
\begin{equation}
C(\tau) = \sum_{t=1}^{T} \sum_{i=1}^{t-1} c(x_i).
\label{eq:total_cost}
\end{equation}

Our key claim is that, under idealized conditions where the complete task trajectory is known in advance, decomposing the trajectory into subtasks reduces the total context cost. The core of the proof is the following lemma, which shows that partitioning a trajectory along a single bridge edge does not increase the total context cost. The detailed proof of Lemma~1 is deferred to Appendix~\ref{app:proof}.

\begin{lemma}
\label{lem:context-cost}
For any trajectory \( \tau_{(S)} = \tau_{(L)} \cup \tau_{(R)} \) partitioned into two subtask trajectories by a single bridge edge \( (x_L, x_R) \) with \( x_L \in \tau_{(L)} \) and \( x_R \in \tau_{(R)} \), the sum of the context costs of the two subtask trajectories is no greater than the cost of the original trajectory:
\[
C(\tau_{(L)}) + C(\tau_{(R)}) \le C(\tau_{(S)}).
\]
\end{lemma}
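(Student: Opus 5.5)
We prove the inequality by writing both sides as sums of pairwise terms $c(x_i)$ indexed by ordered pairs of nodes, and then matching every term on the left to a distinct term on the right. Throughout we treat $\tau_{(S)}$ as a monolithic trajectory $\{x_1,\dots,x_T\}$ with no incoming bridge edges, so by \eqref{eq:total_cost}
\[
C(\tau_{(S)})=\sum_{t=1}^{T}\sum_{i<t} c(x_i)=\sum_{(i,t):\, i<t} c(x_i),
\]
a sum over all ordered pairs (earlier node $x_i$, later node $x_t$). The two subtrajectories $\tau_{(L)}$ and $\tau_{(R)}$ keep the original time order, and the only bridge edge between them is $(x_L,x_R)$, so $\mathcal{B}_L=\varnothing$ and $\mathcal{B}_R=\{(x_L,x_R)\}$. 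If $\tau_{(S)}$ itself had incoming bridge edges, they would be inherited by exactly one of the pieces and add the same terms to both sides, so they can be dropped.

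First, the internal costs. In the same pairwise form, $C_I(\tau_{(L)})$ sums $c(x_i)$ over pairs $i<t$ with both $x_i,x_t\in\tau_{(L)}$, and $C_I(\tau_{(R)})$ does the same for pairs inside $\tau_{(R)}$. Since $\tau_{(L)}$ and $\tau_{(R)}$ partition $\tau_{(S)}$ and the within-subtask order agrees with the global order, these are exactly the same-component pairs of $C(\tau_{(S)})$. Hence
\[
C(\tau_{(S)}) - C_I(\tau_{(L)}) - C_I(\tau_{(R)})=\sum_{\substack{i<t\\ x_i,x_t \text{ in different components}}} c(x_i)\ \ge 0 .
\]
This step only uses $c\ge 0$.

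Second, the external cost. By \eqref{eq:subtask_cost}, $C_E(\tau_{(R)})$ equals $c(x_L)$ multiplied by the number of nodes $x_t\in\tau_{(R)}$ at or after $x_R$ in $\tau_{(R)}$. For each such $x_t$ we have $x_t \succeq x_R$, and the bridge edge points forward in time, so $x_L$ precedes $x_R$ globally. Therefore $x_L$ precedes every such $x_t$, and each pair $(x_L,x_t)$ is a cross-component pair appearing in the residual sum above with weight $c(x_L)$. Distinct $x_t$ give distinct pairs, so the map is injective, and $C_E(\tau_{(R)})\le$ the residual. Combining the two steps gives $C(\tau_{(L)})+C(\tau_{(R)})\le C(\tau_{(S)})$.

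The main obstacle is the injective charging in the second step. It relies on $\tau_{(R)}$ inheriting the global temporal order and on $x_L$ genuinely preceding $x_R$, which the DAG orientation of $\mathcal{E}$ guarantees. It also requires counting the summation range $i=\operatorname{idx}(x_R,\tau_{(R)}),\dots,T_k$ correctly, including the endpoint $x_R$ itself; this is fine because $x_L$ precedes $x_R$ strictly, so $(x_L,x_R)$ is itself a valid cross pair. The remaining cross-component pairs that are not charged leave nonnegative slack, and this slack is where the strict savings of decomposition come from.
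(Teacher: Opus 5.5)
Your proof is correct and follows essentially the same route as the paper. You remove $C_I(\tau_{(L)})+C_I(\tau_{(R)})$ so that what remains of $C(\tau_{(S)})$ is a nonnegative sum over cross-component pairs, then dominate the single new term $c(x_L)\cdot|\{x_t\in\tau_{(R)}: x_t \succeq x_R\}|$ by the cross pairs $(x_L,x_t)$, which is exactly the paper's observation that the $x_i=x_L$ summand of the cross term covers the new bridge-edge cost.

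One small correction to your reduction: an incoming bridge edge $(x_p,x_q)$ of $\tau_{(S)}$ does not contribute the same term to both sides. On the left, $c(x_p)$ is multiplied only by the number of nodes at or after $x_q$ in the piece containing $x_q$, and this count is at most the corresponding count in $\tau_{(S)}$. That inequality, which the paper checks explicitly, is what justifies dropping those edges. It matters because the recursive application of the lemma is to pieces that do have incoming bridge edges.
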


Recursively applying Lemma~1 to \( \tau_{(L)} \) and \( \tau_{(R)} \) yields:
\[
C(\tau) \ge C(\tau_{(L)}) + C(\tau_{(R)}) \ge \cdots \ge \sum_{k=1}^{m} C(\tau_{(k)}).
\]
This establishes that, when the task trajectory is known and the decomposition is performed correctly, multi-agent decomposition reduces context cost. It should be noted that this analysis does not account for context compression or other mitigation strategies that may alter the cost structure.

\subsubsection{Orchestration as Bridge Edge Prediction}

The decomposition above is \textbf{post-hoc}: it presumes complete knowledge of the entire trajectory $\tau$ and its dependency structure. In practice, the trajectory unfolds sequentially, and the DAG $\mathcal{G}$ is not known in advance. The objective of multi-agent orchestration is therefore to predict the bridge edges $\mathcal{E}^*$ before or during execution, without access to the complete trajectory.

Let $\mathcal{E}^*$ denote the true set of bridge edges induced by the full trajectory, and let $\hat{\mathcal{E}}_t$ be the set of bridge edges predicted by the orchestration strategy at time $t$ based on the observed prefix $\tau_{<t}$. Since the true bridge edges $\mathcal{E}^*$ are only revealed after the entire trajectory is complete, the prediction $\hat{\mathcal{E}}_t$ concerns edges that lie in the future, i.e., edges $e_{i \to j}$ with $j > t$. The orchestration problem is to minimize the expected discrepancy between the true and predicted bridge edge sets:

\begin{equation}
\min_{\hat{\mathcal{E}}_t} \; \mathbb{E}\left[ \mathcal{L}(\mathcal{E}^*, \hat{\mathcal{E}}_t) \mid \tau_{<t} \right],
\end{equation}

where $\mathcal{L}$ is a loss function that measures the dissimilarity between two sets of directed edges over $\mathcal{V}$. A correct prediction partitions the trajectory into the true subtasks; an incorrect prediction results in suboptimal decomposition and diminished context cost reduction.

\subsection{Discussion}

Our theoretical and empirical results jointly characterize when multi-agent collaboration helps and why its benefits are bounded.

\textbf{When collaboration pays off.} Multi-agent collaboration is effective precisely when a task admits relatively independent subtasks, so that each agent maintains a focused context window and excludes task-irrelevant information. A single-agent system, by contrast, must accumulate observations indiscriminately, degrading the signal-to-noise ratio and increasing the risk of erroneous execution. This benefit is contingent on the accuracy of bridge-edge prediction: a mispredicted edge can trap an agent in a local deadlock while awaiting a dependency that never materializes, or inflate communication overhead as agents recover from an erroneous partition.

\textbf{The cost of isolation.} Context isolation localizes each agent's working memory to its subtask, containing failures within the corresponding scope rather than letting them contaminate the global context. This mitigates the incomplete workflows and premature termination that plague single-agent execution on long-horizon tasks, at the cost of higher aggregate token consumption, since decomposition extends the overall workflow.

\textbf{Why scaling does not help.} Because each bridge edge is predicted from an observed prefix, prediction errors accumulate as the number of subtasks grows, while additional agents incur higher coordination costs in communication, synchronization, and conflict resolution. The marginal benefit of adding agents therefore diminishes, and may eventually be offset by accumulated errors and rising coordination overhead. This explains why more agents do not necessarily make a system more intelligent, and motivates a mechanism that grows its collaboration graph only where the task structure permits.
\section{Methodology}
\label{sec:method}

We propose Semantic-Aware Incremental Graph Evolution (SAIGE), a lightweight mechanism for decomposing tasks under trajectory uncertainty. Rather than predicting all bridge edges in advance, SAIGE evolves the subtask tree incrementally during execution, growing the structure only when semantic dependencies are revealed by execution feedback.

\begin{figure}[t]
\centering
\includegraphics[width=\linewidth]{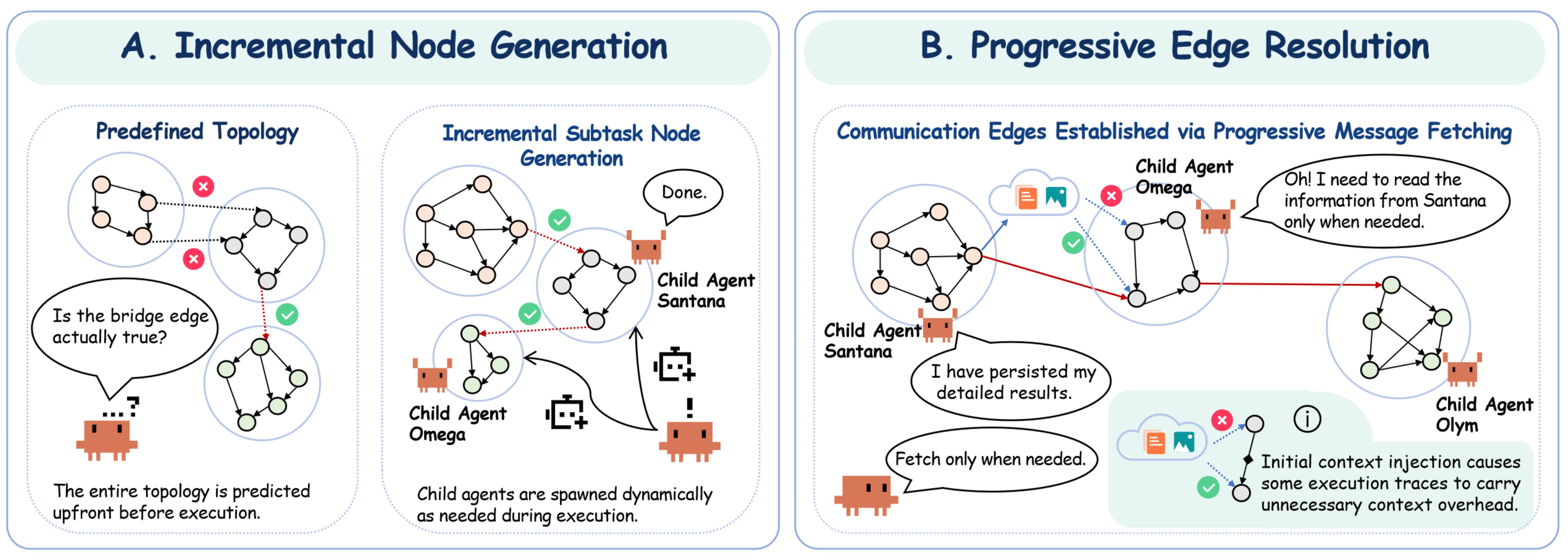}
\caption{SAIGE: incremental node generation via progressive subtask decomposition, and progressive edge resolution via on-demand message fetching.}
\label{fig:method}
\end{figure}

\subsection{Incremental Node Generation}

Bridge edges are inherently difficult to predict reliably at the outset, and premature commitment to a static topology risks compounding early mispredictions. Let $\hat{\mathcal{E}}_{t-1}$ and $\hat{\mathcal{E}}_t$ denote the predicted bridge-edge sets at consecutive steps, with $\mathcal{E}^*$ the ground truth. Since SAIGE predicts incrementally, we have $\hat{\mathcal{E}}_{t-1}\subseteq\hat{\mathcal{E}}_t$, and
\begin{equation}
P(\hat{\mathcal{E}}_t \subseteq \mathcal{E}^* \mid \hat{\mathcal{E}}_{t-1} \subseteq \mathcal{E}^*, \tau_{<t})
= \frac{P(\hat{\mathcal{E}}_t \subseteq \mathcal{E}^* \mid \tau_{<t})}{P(\hat{\mathcal{E}}_{t-1} \subseteq \mathcal{E}^* \mid \tau_{<t})}
\geq P(\hat{\mathcal{E}}_t \subseteq \mathcal{E}^* \mid \tau_{<t}),
\end{equation}
which indicates that deferring edge prediction until execution evidence accumulates is more reliable than committing to a topology upfront.

Accordingly, SAIGE spawns subtask nodes on demand rather than pre-allocating them. A running agent delegates a child agent only when it detects a dependency boundary that satisfies the bridge-edge property, that is, a single-step dependency across which the subproblem can be executed independently given a bounded context summary. When such a boundary is present, the subproblem is delegated and the child agent executes autonomously with a self-contained instruction. When no such boundary is present, no child is spawned and the parent agent continues to execute the task monolithically. This conditional delegation is what allows SAIGE to avoid the decomposition penalty on densely coupled tasks.

\subsection{Progressive Edge Resolution}

Edges between subtask nodes are established through progressive message fetching. Each agent writes its result upon completion and retrieves dependencies only when needed. Let $\mathcal{B}_k$ denote the set of bridge edges entering subtask $\tau_{(k)}$, and let $\operatorname{idx}(x_q, \tau_{(k)})$ denote the step at which dependency on $x_p$ becomes active. Progressive fetching incurs cost
\begin{equation}
\sum_{(x_p, x_q) \in \mathcal{B}_k} \sum_{i=\operatorname{idx}(x_q, \tau_{(k)})}^{T_k} c(x_p)
\le \sum_{(x_p, x_q) \in \mathcal{B}_k} \sum_{i=1}^{T_k} c(x_p),
\end{equation}
where the right-hand side loads all dependencies at the start. Sparse on-demand retrieval thus dominates push-based communication in context cost. More importantly, progressive fetching establishes communication channels dynamically rather than fixing them at decomposition time, allowing the collaboration graph to adapt to dependencies not fully determined in advance. This converts global topology prediction into localized retrieval decisions grounded in execution feedback.
\section{Experiments}
\label{sec:experiments}

\subsection{Experimental Setup}

\subsubsection{Benchmarks}

We evaluate on four benchmarks designed to assess agent capabilities on complex, long-horizon, realistic tasks. \textbf{Terminal Bench 2.1} \cite{terminalbench21} tests agents on hard, realistic command-line interface tasks, with resolution rate as the primary metric. \textbf{NL2Repo Bench} \cite{nl2repo} evaluates long-horizon repository generation from a single natural-language requirements document, using the average test pass rate across all tasks as the primary metric. \textbf{Deep Research Bench II} \cite{deepresearchbench} diagnoses deep research agents through rubrics derived from expert reports, scored as the fraction of rubrics passed. \textbf{AgentIF-OneDay} \cite{agentifoneday} is a task-level instruction-following benchmark for general AI agents in daily scenarios, employing instance-level, rubric-based scoring with binary evaluation, separated bonus/penalty items, and file-content alignment. Detailed descriptions of all benchmarks are provided in Appendix~\ref{app:benchmarks}.

\subsubsection{Baselines}

\textbf{Single-Agent} executes the full task within a single agent context and serves as the monolithic baseline. It runs on the native \textbf{Codex CLI} harness \cite{codex_cli}. \textbf{Task Decomposition and Agent Generation (TDAG)} \cite{tdag} is a multi-agent framework that decomposes complex tasks into smaller subtasks, assigns each subtask to a specifically generated subagent, and continuously summarizes successful execution patterns into reusable skills for future reference. \textbf{DynTaskMAS} \cite{taskdynmas} is a dynamic task graph-driven framework for asynchronous and parallel LLM-based multi-agent systems. \textbf{Graph-of-Agents (GoA)} \cite{goa} models multi-agent LLM communication through a graph-based framework, sampling relevant agents, constructing edges by evaluating response relevance, and aggregating responses via graph-based pooling. Further details on baseline implementations are provided in Appendix~\ref{app:baselines}.

\subsubsection{Implementation}

To ensure harness consistency, all methods are instantiated on the Codex CLI harness. For multi-agent methods, token usage is aggregated across all agents. All baselines and our approach are built upon \textbf{DeepSeek-V4-Pro} \cite{deepseekv4pro} as the foundation LLM. For all multi-agent methods evaluated in this work, we impose a uniform constraint on agent hierarchy: the \textbf{maximum recursion depth is set to 1}, meaning that spawned sub-agents are not permitted to recursively create further sub-agents. Additionally, the \textbf{maximum number of agents} is capped at 4. These constraints are imposed uniformly across all multi-agent methods to ensure system stability and prevent unbounded agent proliferation during execution. We examine the effect of relaxing these constraints in the ablation study (Section~\ref{sec:ablation}) and further verify it on the multi-agent baselines in Appendix~\ref{app:exp_details}.

\subsection{Main Results}

Table~\ref{tab:main} presents the main results across all four benchmarks, reporting the primary metric along with input and output token counts for each benchmark.

\begin{table*}[t]
\small
\centering
\caption{Main results across four long-horizon agent benchmarks. For each benchmark, we report the primary metric and token usage. The best result for each metric is underlined.}
\label{tab:main}
\begin{tabular}{lcccc}
\toprule
& \multicolumn{1}{c}{\textbf{Terminal Bench 2.1}} & \multicolumn{1}{c}{\textbf{NL2Repo}} & \multicolumn{1}{c}{\textbf{Deep Research Bench II}} & \multicolumn{1}{c}{\textbf{AgentIF-OneDay}} \\
\cmidrule(lr){2-2} \cmidrule(lr){3-3} \cmidrule(lr){4-4} \cmidrule(lr){5-5}
\textbf{Method} & \textbf{Resolution Rate $\uparrow$} & \textbf{Pass Rate $\uparrow$} & \textbf{Total Score $\uparrow$} & \textbf{Total Score $\uparrow$} \\
\midrule
Single-Agent & \underline{76.71} & 48.74 & 51.49 & 53.57 \\
TDAG & 71.49 & 47.37 & 54.43 & 59.94 \\
DynTaskMAS & 68.27 & 44.15 & 53.44 & 65.85 \\
GoA & 67.47 & 41.04 & 48.40 & 52.67 \\
\rowcolor[HTML]{E8F6F3}
SAIGE (Ours) & 75.50 & \underline{49.23} & \underline{55.23} & \underline{71.80} \\
\midrule
\multicolumn{5}{l}{\textbf{Token Usage (Input / Output) $\downarrow$}} \\
\midrule
Single-Agent & \underline{475M / 8M} & 1,408M / 10M & 2,805M / 7M & 189M / 2M \\
TDAG & 542M / 13M & 1,562M / 15M & 3,895M / 23M & 218M / 3M \\
DynTaskMAS & 533M / 12M & 1,386M / 15M & 3,240M / 21M & 233M / 4M \\
GoA & 621M / 16M & 1,380M / 14M & 3,816M / 23M & 222M / 3M \\
\rowcolor[HTML]{E8F6F3}
SAIGE (Ours) & 501M / 10M & \underline{1,282M / 11M} & \underline{2,745M / 20M} & \underline{185M / 3M} \\
\bottomrule
\end{tabular}
\end{table*}

\begin{figure*}[t]
\centering
\begin{minipage}{0.48\textwidth}
  \centering
  \includegraphics[width=\linewidth]{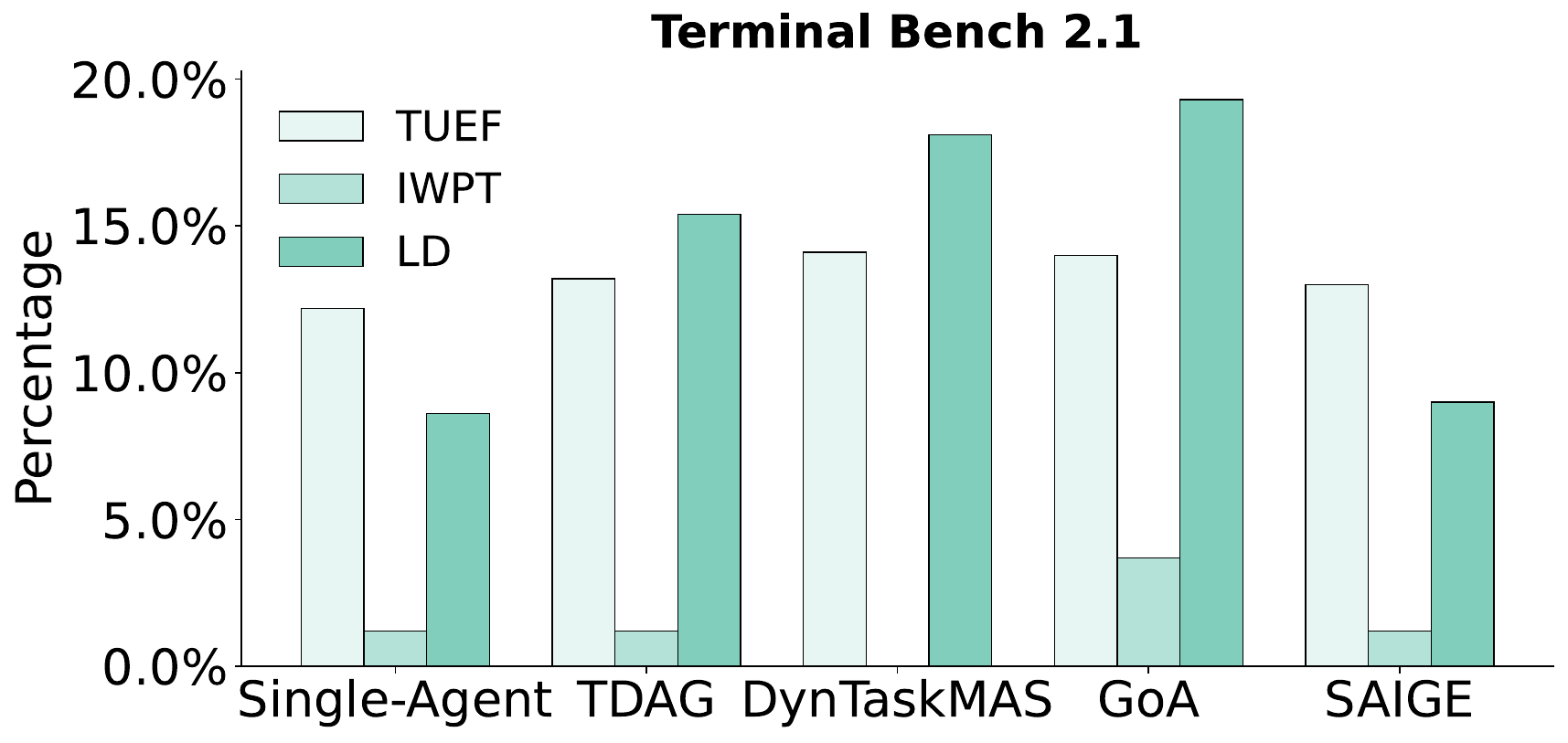}
  \label{fig:fail_terminal}
\end{minipage}
\hfill
\begin{minipage}{0.48\textwidth}
  \centering
  \includegraphics[width=\linewidth]{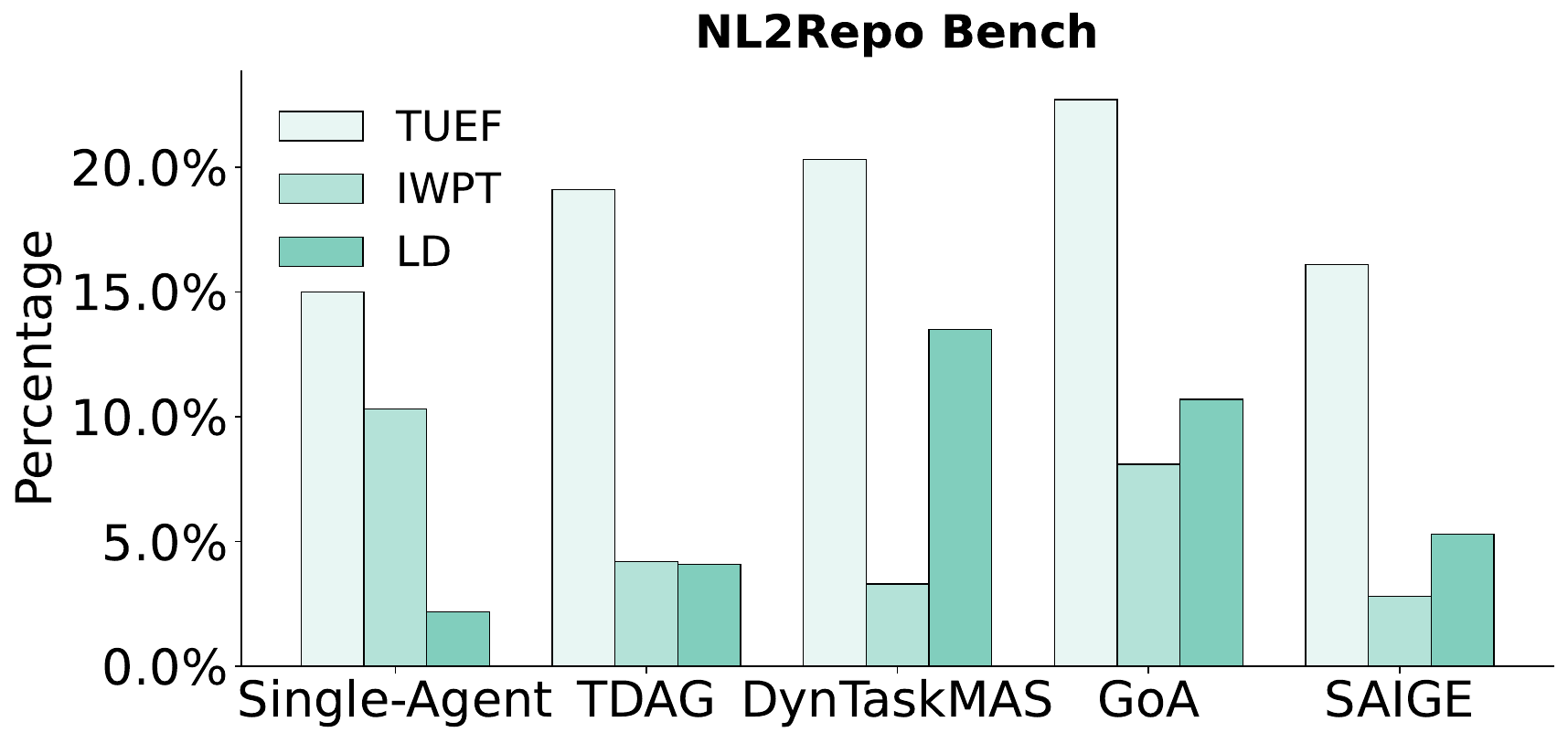}
  \label{fig:fail_nl2repo}
\end{minipage}

\vspace{1em}

\begin{minipage}{0.48\textwidth}
  \centering
  \includegraphics[width=\linewidth]{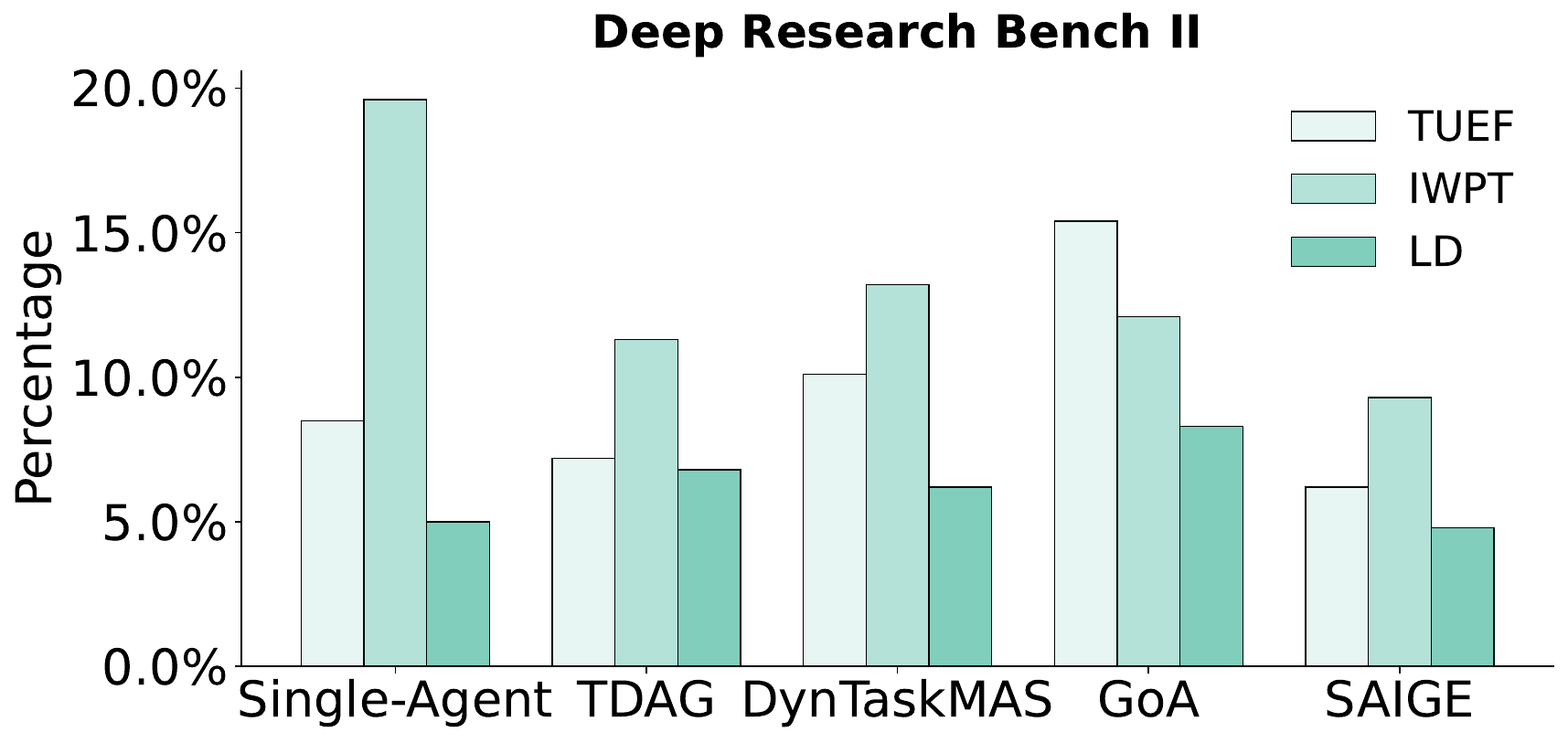}
  \label{fig:fail_drbii}
\end{minipage}
\hfill
\begin{minipage}{0.48\textwidth}
  \centering
  \includegraphics[width=\linewidth]{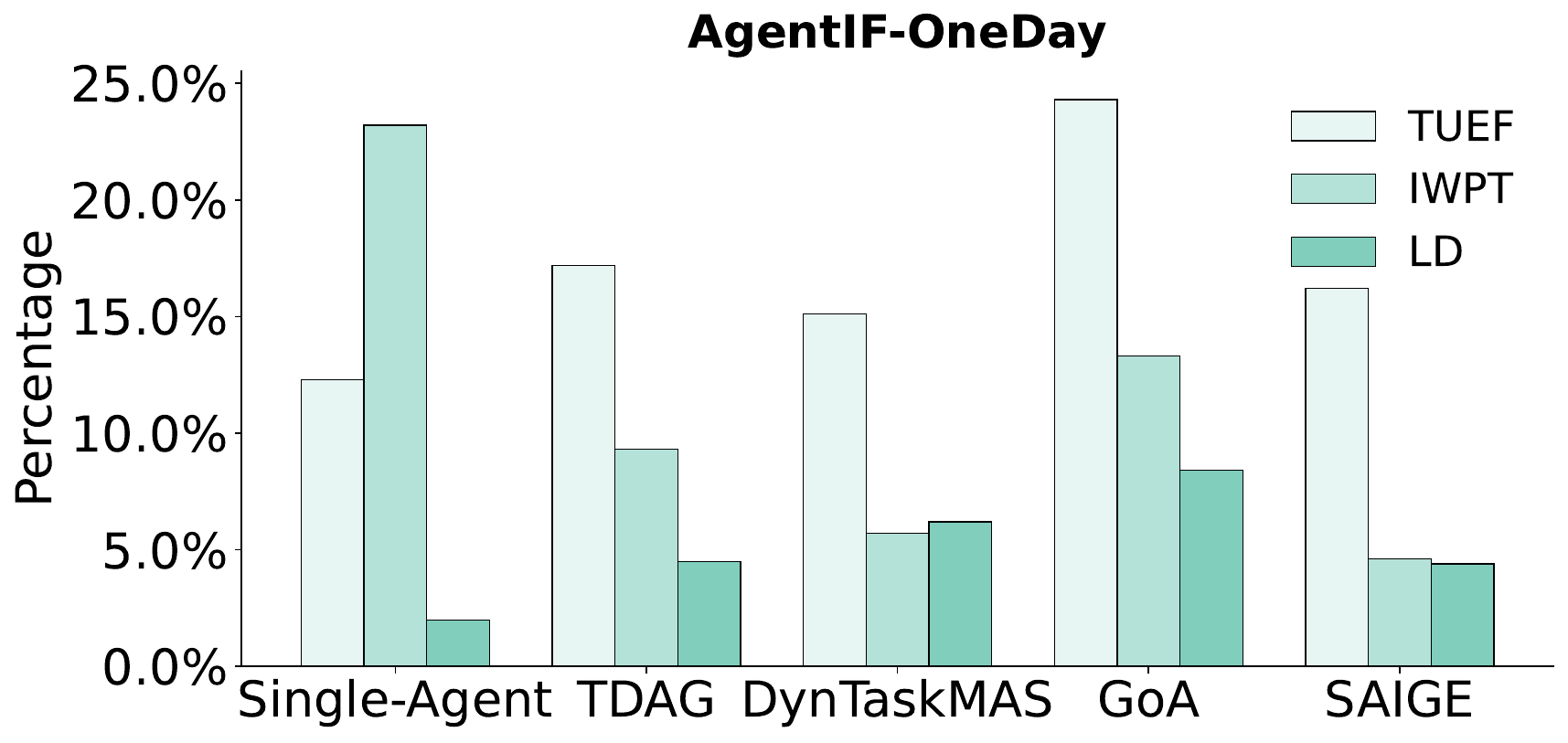}
  \label{fig:fail_agentif}
\end{minipage}
\caption{Failure taxonomy across four benchmarks.}
\label{fig:failure_taxonomy}
\end{figure*}

The four benchmarks differ in the dependency structure of their execution trajectories. Terminal Bench 2.1 and NL2Repo Bench exhibit dense inter-step dependencies: tool invocations, file reads, and file writes form long upstream-downstream chains in which later steps must condition on earlier outputs, leaving little room for independent execution. Deep Research Bench II and AgentIF-OneDay, by contrast, exhibit sparse dependencies, where large portions of the trajectory can proceed without conditioning on one another. This distinction provides a principled basis for the two regimes of multi-agent behavior observed below, with step-level evidence in Appendix~\ref{app:case_study}.

On the densely coupled benchmarks, generic multi-agent methods fall below the single-agent baseline, suggesting that decomposition fragments the dependency chain and that coordination overhead can outweigh the benefit of context isolation. SAIGE does not exhibit the same degradation: it remains comparable to the single-agent baseline on Terminal Bench 2.1 and on NL2Repo Bench, while using fewer tokens than the other multi-agent methods. On the sparsely dependent benchmarks, decomposition appears more beneficial, and SAIGE performs competitively on both, with the most visible margin on AgentIF-OneDay, without the token inflation observed in the other multi-agent variants. GoA, by contrast, employs a more elaborate orchestration design yet yields weaker results on these benchmarks, which suggests that added coordination complexity does not by itself translate into gains when it is not matched to the task's dependency structure.

Taken together, these results are consistent with our central claim that the benefit of multi-agent collaboration is bounded by task structure rather than universal. SAIGE does not dominate the single-agent baseline on every benchmark, and its advantage is most evident on sparsely dependent tasks. Across both regimes, however, it appears to attain a favorable trade-off: it avoids the decomposition penalty on tightly coupled tasks, exploits context isolation on decomposable ones, and grows its collaboration graph in response to execution feedback rather than a pre-committed topology.

\subsection{Failure Pattern Analysis}
\label{app:trajectory}

To understand how multi-agent collaboration affects execution, we collect trajectories whose evaluation scores are significantly below the benchmark average and use DeepSeek-V4-Pro to semantically analyze their execution traces, yielding three recurring failure categories: \textbf{Task Understanding and Execution Failure (TUEF)}, \textbf{Incomplete Workflow and Premature Termination (IWPT)}, and \textbf{Local Deadlock (LD)}. Figure~\ref{fig:failure_taxonomy} summarizes their distribution across the four benchmarks.

On the tightly coupled benchmarks, multi-agent methods are more prone to Local Deadlock: coordination introduces waiting, cyclic dependencies, and resource conflicts that prevent global progress. On the decomposable benchmarks, by contrast, single-agent methods more frequently exhibit Incomplete Workflow and Premature Termination, as the accumulation of context progressively degrades their ability to track the full tool chain, leading them to stop before the workflow is validated or the resolution criterion is satisfied. These two tendencies together clarify why multi-agent collaboration is beneficial only when the task structure admits effective decomposition.

\subsection{Ablation Study}
\label{sec:ablation}

We conduct ablation experiments on Deep Research Bench II and AgentIF-OneDay to examine the sensitivity of SAIGE to agent count, recursion depth, and progressive message fetching. Tables~\ref{tab:ablation_dr} and~\ref{tab:ablation_agentif} report the primary benchmark score and aggregated token usage across all agents for each configuration. For each recursion depth setting, we vary the maximum number of agents to isolate their joint effects, with ``w/o Message Fetching'' denoting SAIGE with progressive message fetching disabled under the default recursion depth ($D_{\max}=1$).

\begin{table*}[htbp]
\small
\centering
\caption{Ablation results on Deep Research Bench II. The default configuration is underlined.}
\label{tab:ablation_dr}
\begin{tabular*}{\textwidth}{@{\extracolsep{\fill}}lcccc}
\toprule
& \multicolumn{3}{c}{\textbf{Max Recursion Depth}} & \\
\cmidrule(lr){2-4}
\textbf{Configuration} & $D_{\max}=1$ & $D_{\max}=2$ & $D_{\max}=3$ & \textbf{w/o Message Fetching} \\
\midrule
\multicolumn{5}{l}{\textbf{Total Score $\uparrow$}} \\
\midrule
$N_{\max}=4$ & \underline{55.23} & 54.88 & 52.21 & 53.54 \\
$N_{\max}=5$ & 52.94 & 52.07 & 50.89 & 53.85 \\
$N_{\max}=6$ & 53.74 & 48.94 & 49.07 & 51.22 \\
\midrule
\multicolumn{5}{l}{\textbf{Token Usage (Input / Output) $\downarrow$}} \\
\midrule
$N_{\max}=4$ & \underline{2,745M / 20M} & 3,102M / 19M & 2,993M / 21M & 4,027M / 24M \\
$N_{\max}=5$ & 3,913M / 25M & 4,225M / 24M & 4,502M / 25M & 5,928M / 26M \\
$N_{\max}=6$ & 6,312M / 32M & 6,782M / 34M & 6,590M / 31M & 6,951M / 34M \\
\bottomrule
\end{tabular*}
\end{table*}

\begin{table*}[htbp]
\small
\centering
\caption{Ablation results on AgentIF-OneDay. The default configuration is underlined.}
\label{tab:ablation_agentif}
\begin{tabular*}{\textwidth}{@{\extracolsep{\fill}}lcccc}
\toprule
& \multicolumn{3}{c}{\textbf{Max Recursion Depth}} & \\
\cmidrule(lr){2-4}
\textbf{Configuration} & $D_{\max}=1$ & $D_{\max}=2$ & $D_{\max}=3$ & \textbf{w/o Message Fetching} \\
\midrule
\multicolumn{5}{l}{\textbf{Total Score $\uparrow$}} \\
\midrule
$N_{\max}=4$ & \underline{71.80} & 66.39 & 63.17 & 66.22 \\
$N_{\max}=5$ & 67.32 & 58.33 & 58.76 & 59.15 \\
$N_{\max}=6$ & 64.02 & 60.16 & 56.75 & 53.50 \\
\midrule
\multicolumn{5}{l}{\textbf{Token Usage (Input / Output) $\downarrow$}} \\
\midrule
$N_{\max}=4$ & \underline{185M / 3M} & 261M / 4M & 272M / 4M & 237M / 3M \\
$N_{\max}=5$ & 239M / 3M & 272M / 4M & 270M / 3M & 268M / 3M \\
$N_{\max}=6$ & 352M / 4M & 360M / 4M & 371M / 4M & 366M / 4M \\
\bottomrule
\end{tabular*}
\end{table*}

Across the tested configurations, increasing the agent budget does not consistently improve task performance. On both benchmarks, expanding the agent pool beyond the default setting tends to yield lower scores and higher token consumption, with individual configurations deviating from this trend. Deepening the recursion level shows a similar pattern: allowing sub-agents to spawn further sub-agents tends to reduce performance and increase token usage. Disabling progressive message fetching is also associated with a comparable decline, most notably on AgentIF-OneDay. Token usage generally increases with agent count and recursion depth, while task performance does not show a corresponding gain. These results suggest that more agents do not necessarily make a system more intelligent: beyond a certain point, the marginal agent appears to add coordination cost with limited additional problem-solving capacity.
\section{Conclusion}
\label{sec:conclusion}

In this work, we reexamine the value of multi-agent collaboration in the era of increasingly capable single-agent harnesses. Through a graph-theoretic formalization of task trajectories, we show that multi-agent collaboration is not universally beneficial, but confers systematic advantages specifically on long-horizon tasks with sparse dependencies, where context isolation can be exploited without incurring prohibitive coordination overhead. Building on this insight, we propose SAIGE, a multi-agent collaboration mechanism that models collaboration as a dynamically evolving graph and establishes semantic dependencies through content-based information retrieval. Experiments on long-horizon benchmarks show that SAIGE achieves a favorable trade-off between context efficiency and task performance. These findings suggest that multi-agent superiority is bounded by task structure rather than universal, and that indiscriminate scaling of agents warrants reconsideration.

\bibliographystyle{unsrtnat}
\bibliography{references} 
\newpage

\appendix
\section{Proofs}
\label{app:proof}

\textbf{Theorem~1.}
\textit{Let $\tau = \{x_1, x_2, \ldots, x_T\}$ be a task trajectory with no redundant interactions, and let $\mathcal{E}$ denote the set of directed edges defined by the dependency criterion. Then the graph $\mathcal{G} = (\mathcal{V}, \mathcal{E})$ with $\mathcal{V} = \tau$ is a weakly connected directed acyclic graph.}

\begin{proof}

For any edge $(x_i, x_t) \in \mathcal{E}$, the dependency criterion requires $i < t$, since the value function $Q_t$ is evaluated with respect to the historical context $\mathcal{H}_t = \{x_1, \ldots, x_{t-1}\}$. Thus every edge is oriented from a lower-index vertex to a higher-index vertex.

Define a mapping $\phi: \mathcal{V} \to \mathbb{N}$ by $\phi(x_i) = i$. For every edge $(x_i, x_t) \in \mathcal{E}$, we have
\[
\phi(x_i) = i < t = \phi(x_t).
\]
Therefore $\phi$ is a strict order-preserving labeling of the vertices, which is a well-known sufficient condition for acyclicity. Equivalently, any directed path
\[
x_{i_1} \to x_{i_2} \to \cdots \to x_{i_\ell}
\]
must satisfy
\[
i_1 < i_2 < \cdots < i_\ell,
\]
precluding the existence of a directed cycle. Hence $\mathcal{G}$ is a directed acyclic graph.

We next establish weak connectivity. By assumption, every node in $\tau$ contributes to at least one subsequent decision, implying that every vertex $x_i$ with $i < T$ has out-degree at least one, and the terminal vertex $x_T$ is the unique sink. Hence, for every vertex $x_i$ with $i < T$, there exists a directed path from $x_i$ to $x_T$: starting from $x_i$, each step follows an outgoing edge $(x_i, x_j)$ with $j > i$, and since indices strictly increase along edges and the trajectory is finite, the path must terminate at $x_T$.

Now consider any two vertices $x_i, x_j \in \mathcal{V}$. If $i = j$ the claim is trivial. Otherwise, there exists a directed path $P_i$ from $x_i$ to $x_T$ and a directed path $P_j$ from $x_j$ to $x_T$. Reversing the orientation of $P_j$ yields an undirected path from $x_T$ to $x_j$. Concatenating $P_i$ with the reversed $P_j$ gives an undirected walk from $x_i$ to $x_j$ in the underlying undirected graph of $\mathcal{G}$. Therefore any two vertices are connected in the underlying undirected graph, and $\mathcal{G}$ is weakly connected.
\end{proof}

\textbf{Theorem~2.}
\textit{Let \(\mathcal{G} = (\mathcal{V}, \mathcal{E})\) be a weakly connected directed acyclic graph, and let \(\mathcal{E}^* \subseteq \mathcal{E}\) be the set of bridge edges. Define \(\mathcal{V}_{\mathcal{S}}\) as the set of weakly connected components of \(\mathcal{G} \setminus \mathcal{E}^*\), and let
\[
\mathcal{E}_{\mathcal{S}} = \{ (\mathcal{C}_p, \mathcal{C}_q) \mid \exists (u, v) \in \mathcal{E}^* \text{ with } u \in \mathcal{C}_p, v \in \mathcal{C}_q \}.
\]
Then \(\mathcal{T} = (\mathcal{V}_{\mathcal{S}}, \mathcal{E}_{\mathcal{S}})\) is a tree.}

\begin{proof}

We first prove that \(\mathcal{T}\) is connected. Since \(\mathcal{G}\) is weakly connected, its underlying undirected graph is connected. Contracting each weakly connected component \(\mathcal{C}_i \in \mathcal{V}_{\mathcal{S}}\) to a single vertex preserves weak connectivity of the quotient graph. The edge set of this quotient graph is precisely \(\mathcal{E}_{\mathcal{S}}\), as all edges between distinct components are bridge edges. Hence \(\mathcal{T}\) is weakly connected.

We next prove that \(\mathcal{T}\) is acyclic. Suppose, for contradiction, that \(\mathcal{T}\) contains an undirected cycle \(C_{\mathcal{S}}\) of length \(\ell \ge 2\), with vertices
\[
\mathcal{C}_{i_1}, \mathcal{C}_{i_2}, \ldots, \mathcal{C}_{i_\ell}
\]
and edges
\[
e_1, e_2, \ldots, e_\ell \in \mathcal{E}_{\mathcal{S}},
\]
where \(e_j = (\mathcal{C}_{i_j}, \mathcal{C}_{i_{j+1}})\) with indices taken modulo \(\ell\).

By the definition of \(\mathcal{E}_{\mathcal{S}}\), each \(e_j\) corresponds to a bridge edge \((u_j, v_j) \in \mathcal{E}^*\) with
\[
u_j \in \mathcal{C}_{i_j} \quad \text{and} \quad v_j \in \mathcal{C}_{i_{j+1}}.
\]
Since each \(\mathcal{C}_{i_j}\) is weakly connected, there exists an undirected path \(P_j\) within \(\mathcal{C}_{i_{j+1}}\) connecting \(v_j\) to \(u_{j+1}\), with indices modulo \(\ell\).

Concatenating the bridge edge \((u_j, v_j)\) with the path \(P_j\) for each \(j\) yields an undirected cycle in \(\mathcal{G}\) that contains the bridge edge \((u_1, v_1)\) but does not traverse it twice. The removal of \((u_1, v_1)\) therefore does not disconnect \(u_1\) from \(v_1\), as the alternative route through
\[
P_1, e_2, P_2, \ldots, e_\ell
\]
remains. This contradicts the defining property of a bridge edge, namely that its removal destroys all paths between its endpoints. Therefore \(\mathcal{T}\) contains no undirected cycle.

A weakly connected graph with no undirected cycles is a tree, completing the proof.
\end{proof}

\textbf{Lemma~1.}
\textit{For any trajectory \( \tau_{(S)} = \tau_{(L)} \cup \tau_{(R)} \) partitioned into two subtask trajectories by a single bridge edge \( (x_L, x_R) \) with \( x_L \in \tau_{(L)} \) and \( x_R \in \tau_{(R)} \), the sum of the context costs of the two subtask trajectories is no greater than the cost of the original trajectory:
\[
C(\tau_{(L)}) + C(\tau_{(R)}) \le C(\tau_{(S)}).
\]}

\begin{proof}
We proceed by analyzing the internal and external context costs separately, then combine the two bounds to establish the result.

\paragraph{Setup and notation.}
Both subsequences preserve the original temporal order, so every node in $\tau_{(L)}$ precedes every node in $\tau_{(R)}$. For any trajectory $\tau$ and node $x_i \in \tau$, let $\operatorname{idx}(x_i, \tau)$ denote the position of $x_i$ in $\tau$.

For any trajectory $\tau$, the internal context cost can be rewritten as
\[
C_I(\tau) = \sum_{x_i \in \tau} c(x_i) \cdot \bigl|\{x_t \in \tau : \operatorname{idx}(x_i, \tau) < \operatorname{idx}(x_t, \tau)\}\bigr|,
\]
which charges each node $x_i$ for every subsequent node in the same trajectory that must retain $x_i$ in its historical context.

For any trajectory $\tau$ with incoming bridge edge set $\mathcal{B}(\tau) \subseteq \mathcal{E}^*$, the external context cost can be rewritten as
\[
C_E(\tau) = \sum_{(x_p, x_q) \in \mathcal{B}(\tau)} c(x_p) \cdot \bigl|\{x_t \in \tau : \operatorname{idx}(x_q, \tau) \le \operatorname{idx}(x_t, \tau)\}\bigr|,
\]
which charges each predecessor node $x_p$ for every node in $\tau$ that must retain $x_p$ after it is pulled across the bridge edge $(x_p, x_q)$.

The total context cost is $C(\tau) = C_I(\tau) + C_E(\tau)$.

\paragraph{Internal cost difference.}
Since $\tau_{(S)} = \tau_{(L)} \cup \tau_{(R)}$, the internal cost of $\tau_{(S)}$ decomposes as
\[
\begin{aligned}
C_I(\tau_{(S)}) &= C_I(\tau_{(L)}) + C_I(\tau_{(R)}) \\
&\quad + \sum_{x_i \in \tau_{(L)}} c(x_i) \cdot \bigl|\{x_t \in \tau_{(R)} : \operatorname{idx}(x_i, \tau_{(S)}) < \operatorname{idx}(x_t, \tau_{(S)})\}\bigr| \\
&\quad + \sum_{x_j \in \tau_{(R)}} c(x_j) \cdot \bigl|\{x_t \in \tau_{(L)} : \operatorname{idx}(x_j, \tau_{(S)}) < \operatorname{idx}(x_t, \tau_{(S)})\}\bigr|.
\end{aligned}
\]
The first extra term accounts for the fact that in the monolithic trajectory, nodes in $\tau_{(R)}$ must retain preceding nodes in $\tau_{(L)}$ in their context. The second extra term is symmetric: it accounts for nodes in $\tau_{(R)}$ whose context must be retained by subsequent nodes in $\tau_{(L)}$. Since both extra terms are non-negative, dropping the second term yields the lower bound
\begin{equation}
C_I(\tau_{(S)}) - C_I(\tau_{(L)}) - C_I(\tau_{(R)})
\ge \sum_{x_i \in \tau_{(L)}} c(x_i) \cdot \bigl|\{x_t \in \tau_{(R)} : \operatorname{idx}(x_i, \tau_{(S)}) < \operatorname{idx}(x_t, \tau_{(S)})\}\bigr|.
\label{eq:I_diff}
\end{equation}

\paragraph{External cost difference.}
Let $\mathcal{B}_S$ be the set of bridge edges entering $\tau_{(S)}$. After partitioning by the bridge edge $(x_L, x_R)$, the incoming bridge edge sets for the two subtasks are
\[
\mathcal{B}_L = \{ (x_p, x_q) \in \mathcal{B}_S \mid x_q \in \tau_{(L)} \},
\qquad
\mathcal{B}_R = \{ (x_p, x_q) \in \mathcal{B}_S \mid x_q \in \tau_{(R)} \} \cup \{(x_L, x_R)\}.
\]
We claim that $\mathcal{B}_S$ decomposes disjointly as
\[
\mathcal{B}_S = \mathcal{B}_L \cup \bigl(\mathcal{B}_R \setminus \{(x_L, x_R)\}\bigr),
\]
with $\mathcal{B}_L \cap \bigl(\mathcal{B}_R \setminus \{(x_L, x_R)\}\bigr) = \varnothing$. Indeed, suppose for contradiction that there exists a bridge edge $(x_p, x_q) \in \mathcal{B}_L \cap \bigl(\mathcal{B}_R \setminus \{(x_L, x_R)\}\bigr)$. Then $x_q$ would belong to both $\tau_{(L)}$ and $\tau_{(R)}$, contradicting the fact that $\tau_{(L)}$ and $\tau_{(R)}$ are disjoint subtask trajectories obtained by removing the bridge edge $(x_L, x_R)$.

The external cost of the original trajectory is
\[
\begin{aligned}
C_E(\tau_{(S)})
&= \sum_{(x_p, x_q) \in \mathcal{B}_L} c(x_p) \cdot \bigl|\{x_t \in \tau_{(S)} : \operatorname{idx}(x_q, \tau_{(S)}) \le \operatorname{idx}(x_t, \tau_{(S)})\}\bigr| \\
&\quad + \sum_{(x_p, x_q) \in \mathcal{B}_R \setminus \{(x_L, x_R)\}} c(x_p) \cdot \bigl|\{x_t \in \tau_{(S)} : \operatorname{idx}(x_q, \tau_{(S)}) \le \operatorname{idx}(x_t, \tau_{(S)})\}\bigr|.
\end{aligned}
\]

The external costs of the two subtasks are
\[
\begin{aligned}
C_E(\tau_{(L)}) + C_E(\tau_{(R)})
&= \sum_{(x_p, x_q) \in \mathcal{B}_L} c(x_p) \cdot \bigl|\{x_t \in \tau_{(L)} : \operatorname{idx}(x_q, \tau_{(L)}) \le \operatorname{idx}(x_t, \tau_{(L)})\}\bigr| \\
&\quad + \sum_{(x_p, x_q) \in \mathcal{B}_R \setminus \{(x_L, x_R)\}} c(x_p) \cdot \bigl|\{x_t \in \tau_{(R)} : \operatorname{idx}(x_q, \tau_{(R)}) \le \operatorname{idx}(x_t, \tau_{(R)})\}\bigr| \\
&\quad + c(x_L) \cdot \bigl|\{x_t \in \tau_{(R)} : \operatorname{idx}(x_R, \tau_{(R)}) \le \operatorname{idx}(x_t, \tau_{(R)})\}\bigr|.
\end{aligned}
\]

For any $(x_p, x_q) \in \mathcal{B}_L$, since $\tau_{(L)} \subseteq \tau_{(S)}$ and the relative positions of nodes within $\tau_{(L)}$ are preserved in $\tau_{(S)}$, we have $\operatorname{idx}(x_q, \tau_{(L)}) = \operatorname{idx}(x_q, \tau_{(S)})$. Consequently,
\[
\{x_t \in \tau_{(L)} : \operatorname{idx}(x_q, \tau_{(L)}) \le \operatorname{idx}(x_t, \tau_{(L)})\}
\subseteq
\{x_t \in \tau_{(S)} : \operatorname{idx}(x_q, \tau_{(S)}) \le \operatorname{idx}(x_t, \tau_{(S)})\},
\]
and therefore
\[
\begin{aligned}
&\bigl|\{x_t \in \tau_{(L)} : \operatorname{idx}(x_q, \tau_{(L)}) \le \operatorname{idx}(x_t, \tau_{(L)})\}\bigr| \\
&\qquad \le
\bigl|\{x_t \in \tau_{(S)} : \operatorname{idx}(x_q, \tau_{(S)}) \le \operatorname{idx}(x_t, \tau_{(S)})\}\bigr|.
\end{aligned}
\]
The same argument applies to any $(x_p, x_q) \in \mathcal{B}_R \setminus \{(x_L, x_R)\}$, yielding
\[
\begin{aligned}
&\bigl|\{x_t \in \tau_{(R)} : \operatorname{idx}(x_q, \tau_{(R)}) \le \operatorname{idx}(x_t, \tau_{(R)})\}\bigr| \\
&\qquad \le
\bigl|\{x_t \in \tau_{(S)} : \operatorname{idx}(x_q, \tau_{(S)}) \le \operatorname{idx}(x_t, \tau_{(S)})\}\bigr|.
\end{aligned}
\]

Thus, comparing the expressions for $C_E(\tau_{(S)})$ and $C_E(\tau_{(L)}) + C_E(\tau_{(R)})$ term by term, the only term present in the subtask costs but absent from the original cost is the newly introduced bridge edge cost
\[
c(x_L) \cdot \bigl|\{x_t \in \tau_{(R)} : \operatorname{idx}(x_R, \tau_{(R)}) \le \operatorname{idx}(x_t, \tau_{(R)})\}\bigr|.
\]
Therefore,
\begin{equation}
C_E(\tau_{(S)}) - C_E(\tau_{(L)}) - C_E(\tau_{(R)})
\ge -c(x_L) \cdot \bigl|\{x_t \in \tau_{(R)} : \operatorname{idx}(x_R, \tau_{(R)}) \le \operatorname{idx}(x_t, \tau_{(R)})\}\bigr|.
\label{eq:E_diff}
\end{equation}

\paragraph{Combining the bounds.}
Combining (\ref{eq:I_diff}) and (\ref{eq:E_diff}), we have
\[
\begin{aligned}
& C(\tau_{(S)}) - C(\tau_{(L)}) - C(\tau_{(R)}) \\
&= \bigl[C_I(\tau_{(S)}) - C_I(\tau_{(L)}) - C_I(\tau_{(R)})\bigr] \\
&\quad + \bigl[C_E(\tau_{(S)}) - C_E(\tau_{(L)}) - C_E(\tau_{(R)})\bigr] \\
&\ge \sum_{x_i \in \tau_{(L)}} c(x_i) \cdot \bigl|\{x_t \in \tau_{(R)} : \operatorname{idx}(x_i, \tau_{(S)}) < \operatorname{idx}(x_t, \tau_{(S)})\}\bigr| \\
&\quad - c(x_L) \cdot \bigl|\{x_t \in \tau_{(R)} : \operatorname{idx}(x_R, \tau_{(R)}) \le \operatorname{idx}(x_t, \tau_{(R)})\}\bigr|.
\end{aligned}
\]

Since $x_L \in \tau_{(L)}$, the first sum contains the term corresponding to $x_i = x_L$:
\[
\begin{aligned}
&\sum_{x_i \in \tau_{(L)}} c(x_i) \cdot \bigl|\{x_t \in \tau_{(R)} : \operatorname{idx}(x_i, \tau_{(S)}) < \operatorname{idx}(x_t, \tau_{(S)})\}\bigr| \\
&\qquad \ge c(x_L) \cdot \bigl|\{x_t \in \tau_{(R)} : \operatorname{idx}(x_L, \tau_{(S)}) < \operatorname{idx}(x_t, \tau_{(S)})\}\bigr|.
\end{aligned}
\]

Because $(x_L, x_R)$ is a bridge edge, we have $\operatorname{idx}(x_L, \tau_{(S)}) < \operatorname{idx}(x_R, \tau_{(S)})$, and hence
\[
\{x_t \in \tau_{(R)} : \operatorname{idx}(x_R, \tau_{(S)}) \le \operatorname{idx}(x_t, \tau_{(S)})\}
\subseteq
\{x_t \in \tau_{(R)} : \operatorname{idx}(x_L, \tau_{(S)}) < \operatorname{idx}(x_t, \tau_{(S)})\}.
\]
Since $\operatorname{idx}(x_R, \tau_{(R)}) = \operatorname{idx}(x_R, \tau_{(S)})$, this implies
\[
\begin{aligned}
&\bigl|\{x_t \in \tau_{(R)} : \operatorname{idx}(x_R, \tau_{(R)}) \le \operatorname{idx}(x_t, \tau_{(R)})\}\bigr| \\
&\qquad \le
\bigl|\{x_t \in \tau_{(R)} : \operatorname{idx}(x_L, \tau_{(S)}) < \operatorname{idx}(x_t, \tau_{(S)})\}\bigr|.
\end{aligned}
\]

Therefore, the first term in the lower bound dominates the second term, and we obtain
\[
\begin{aligned}
& C(\tau_{(S)}) - C(\tau_{(L)}) - C(\tau_{(R)}) \\
&\ge c(x_L) \cdot \bigl|\{x_t \in \tau_{(R)} : \operatorname{idx}(x_L, \tau_{(S)}) < \operatorname{idx}(x_t, \tau_{(S)})\}\bigr| \\
&\quad - c(x_L) \cdot \bigl|\{x_t \in \tau_{(R)} : \operatorname{idx}(x_R, \tau_{(R)}) \le \operatorname{idx}(x_t, \tau_{(R)})\}\bigr| \\
&\ge 0.
\end{aligned}
\]

Thus,
\[
C(\tau_{(L)}) + C(\tau_{(R)}) \le C(\tau_{(S)}),
\]
which completes the proof.
\end{proof}

\section{Experimental Details}

\subsection{Benchmark Details}
\label{app:benchmarks}

\textbf{Terminal Bench 2.1} \cite{terminalbench21} is a curated hard benchmark composed of tasks in computer terminal environments inspired by real-world workflows. Each task features a unique environment, a human-written solution, and comprehensive tests for verification. Because the framework is interactive, agent and model performance are difficult to decouple; many agent scaffolds are engineered to accommodate the tendencies of specific models, and agents are not explicitly required to use a terminal as their sole tool. To remain faithful to the benchmark's premise, a neutral testbed with a single headless terminal is adopted, and tasks are completed using only Bash commands. The primary metric is the resolution rate. In our evaluation, each task is attempted three times, and the resolution rate is computed as the number of tasks completed in a run divided by the total number of tasks, averaged over the three runs.

\textbf{NL2Repo Bench} \cite{nl2repo} evaluates repository-level coding abilities with verifiable ground truth. Tasks are derived from real-world Python libraries characterized by modular architectures and authoritative \texttt{pytest} suites. Agents receive only a single natural-language specification and must reconstruct the complete repository from scratch, including file structures and functional logic. Correctness is strictly measured by executing the generated code against the original upstream tests. Each task is scored by the fraction of unit tests it passes, and the final benchmark score is the average of these per-task pass rates across all tasks.

\textbf{Deep Research Bench II} \cite{deepresearchbench} diagnoses deep research agents through rubrics derived from expert reports. The benchmark evaluates agents on information recall, analysis, and presentation, with rubrics constructed through a four-stage pipeline of LLM extraction, self-evaluation iteration, manual revision, and expert review. For each task, the score is the fraction of rubrics it passes, and the final benchmark score is computed as the total number of rubrics passed divided by the total number of rubrics across all tasks.

\textbf{AgentIF-OneDay} \cite{agentifoneday} is a task-level instruction-following benchmark for general AI agents in daily scenarios. It employs instance-level, rubric-based scoring with three key properties: binary scoring, where each rubric item is strictly evaluated as satisfied or not to ensure objectivity; distinction between bonus and penalty items, separating capability assessment from error rates; and file-content alignment, enabling direct evaluation of agent-generated files alongside textual output. Let $N$ be the total number of problems. For the $i$-th problem, let $S_i^+$ and $S_i^-$ denote the sum of satisfied bonus points and triggered penalty points, and $S_i^{\max}$ the maximum achievable score. The normalized score is
\begin{equation}
s_i = \frac{\max(0, S_i^+ - S_i^-)}{S_i^{\max}},
\end{equation}
and the final benchmark score is the mean of these normalized scores:
\begin{equation}
\mathrm{Score}_{\mathrm{final}} = \frac{1}{N} \sum_{i=1}^{N} s_i.
\end{equation}
Because AgentIF-OneDay involves multimodal requirements such as image editing and in-depth research, while DeepSeek-V4-Pro \cite{deepseekv4pro} does not natively support multimodal input, we provide a multimodal skill that allows the agent to invoke DeepSeek-V4-Flash-Vision-Exp \cite{deepseekv4flashvisionexp} for image understanding. This skill is implemented as a script that the agent can invoke directly, so the agent can decide when to delegate visual perception to the vision model without altering the benchmark's evaluation protocol.

\subsection{Baseline Details}
\label{app:baselines}

\textbf{Single-Agent} executes the full task within a single agent context, serving as the monolithic baseline. It runs on the native Codex CLI harness \cite{codex_cli}.

\textbf{Task Decomposition and Agent Generation (TDAG)} \cite{tdag} is a multi-agent framework built on dynamic task decomposition and agent generation. It decomposes complex tasks into smaller subtasks and assigns each subtask to a specifically generated subagent, thereby improving adaptability in diverse and unpredictable real-world tasks. The decomposition is not static: subtasks are dynamically adjusted based on the outcomes of preceding tasks, allowing the framework to refine the task structure as execution unfolds. Successful execution patterns are further summarized into reusable skills for future reference.

\textbf{DynTaskMAS} \cite{taskdynmas} is a dynamic task graph-driven framework for asynchronous and parallel LLM-based multi-agent systems. It orchestrates asynchronous and parallel operations through dynamic task graphs, and comprises four key components: a Dynamic Task Graph Generator that decomposes complex tasks while maintaining logical dependencies, an Asynchronous Parallel Execution Engine that optimizes resource utilization through efficient task scheduling, a Semantic-Aware Context Management System that enables efficient information sharing among agents, and an Adaptive Workflow Manager that dynamically optimizes system performance. This design supports scalable, high-performance multi-agent execution on complex, dynamic tasks.

\textbf{Graph-of-Agents (GoA)} \cite{goa} models multi-agent LLM communication through a graph-based framework. It first samples relevant agents from a candidate pool, then constructs edges between them by evaluating the relevance of their responses to one another, and finally aggregates the refined responses via graph-based pooling. Message passing proceeds in two stages, source-to-target followed by target-to-source, allowing highly relevant agents to refine the responses of less relevant ones and vice versa. To ensure consistency of the base model across all evaluated methods, we do not adopt GoA's original multi-source LLM design, in which different agents are backed by different foundation models. Instead, we instantiate all agents with the same foundation LLM and differentiate them by assigning distinct roles, so that the comparison isolates the effect of orchestration rather than that of heterogeneous model capabilities.

\subsection{Detailed Experimental Results}
\label{app:exp_details}

\subsubsection{Detailed Main Results}
\label{app:detailed_main}

Beyond the aggregate metrics in Table~\ref{tab:main}, we report the detailed breakdown when a benchmark provides finer-grained evaluation dimensions.

\begin{table*}[htbp]
\small
\centering
\caption{Detailed results on Deep Research Bench II.}
\label{tab:drbii}
\begin{tabular*}{\textwidth}{@{\extracolsep{\fill}}lcccc}
\toprule
\textbf{Method} & \textbf{Info Recall} & \textbf{Analysis} & \textbf{Presentation} & \textbf{Total Score} \\
\midrule
Single-Agent & 47.57 & 52.08 & 92.49 & 51.94 \\
TDAG & 50.72 & 53.56 & 91.15 & 54.43 \\
DynTaskMAS & 48.85 & 54.69 & 93.57 & 53.44 \\
GoA & 43.11 & 50.77 & 92.49 & 48.40 \\
SAIGE (Ours) & 51.40 & 54.33 & 93.16 & 55.23 \\
\bottomrule
\end{tabular*}
\end{table*}

\begin{table*}[htbp]
\small
\centering
\caption{Detailed results on AgentIF-OneDay.}
\label{tab:agentif}
\begin{tabular*}{\textwidth}{@{\extracolsep{\fill}}lcccc}
\toprule
\textbf{Method} & \textbf{Content} & \textbf{Form} & \textbf{Execution} & \textbf{Total Score} \\
\midrule
Single-Agent & 51.76 & 47.57 & 43.71 & 53.57 \\
TDAG & 60.92 & 59.42 & 48.12 & 59.94 \\
DynTaskMAS & 66.18 & 59.35 & 48.39 & 65.85 \\
GoA & 53.04 & 48.99 & 41.08 & 52.67 \\
SAIGE (Ours) & 68.77 & 60.95 & 54.81 & 71.80 \\
\bottomrule
\end{tabular*}
\end{table*}

\subsubsection{Further Ablation Study}
\label{app:ablation_configs}

We further examine the ablation results under the five configurations for the multi-agent baselines. As shown in Figure~\ref{fig:ablation_all}, increasing the agent count or deepening the recursion level generally reduces the task score while increasing token consumption. Across both benchmarks and both baselines, i.e., TDAG and DynTaskMAS, the default configuration $(N=4, D=1)$ attains the most favorable overall trade-off between task performance and token cost, whereas the remaining configurations fall into the lower-right region relative to the default. This indicates that additional agents or deeper recursion introduce coordination overhead without improving problem-solving capacity.

\begin{figure*}[h]
\centering
\begin{minipage}{0.45\textwidth}
  \centering
  \includegraphics[width=\linewidth]{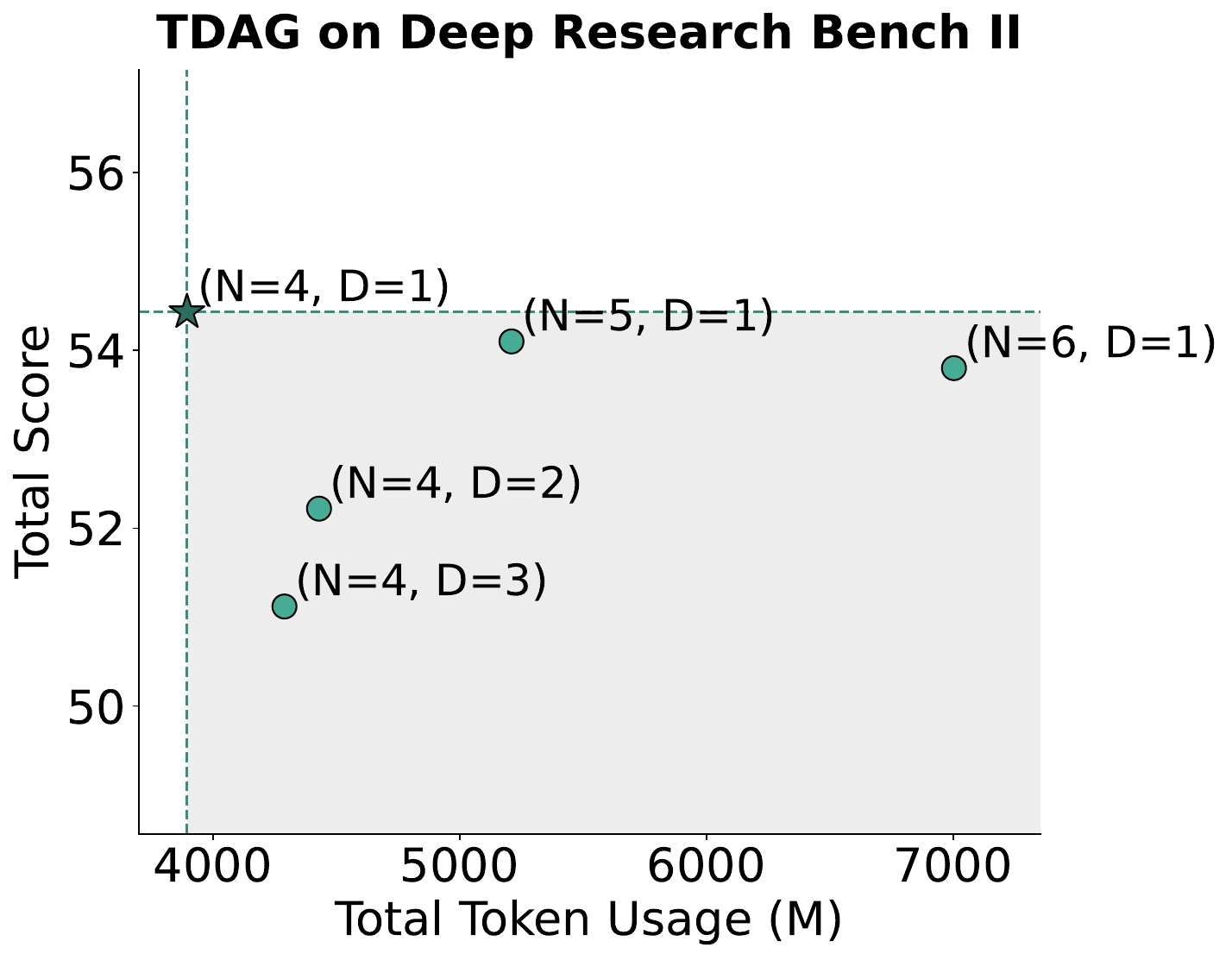}
  \label{fig:ablation_tdag_drbii}
\end{minipage}
\hfill
\begin{minipage}{0.45\textwidth}
  \centering
  \includegraphics[width=\linewidth]{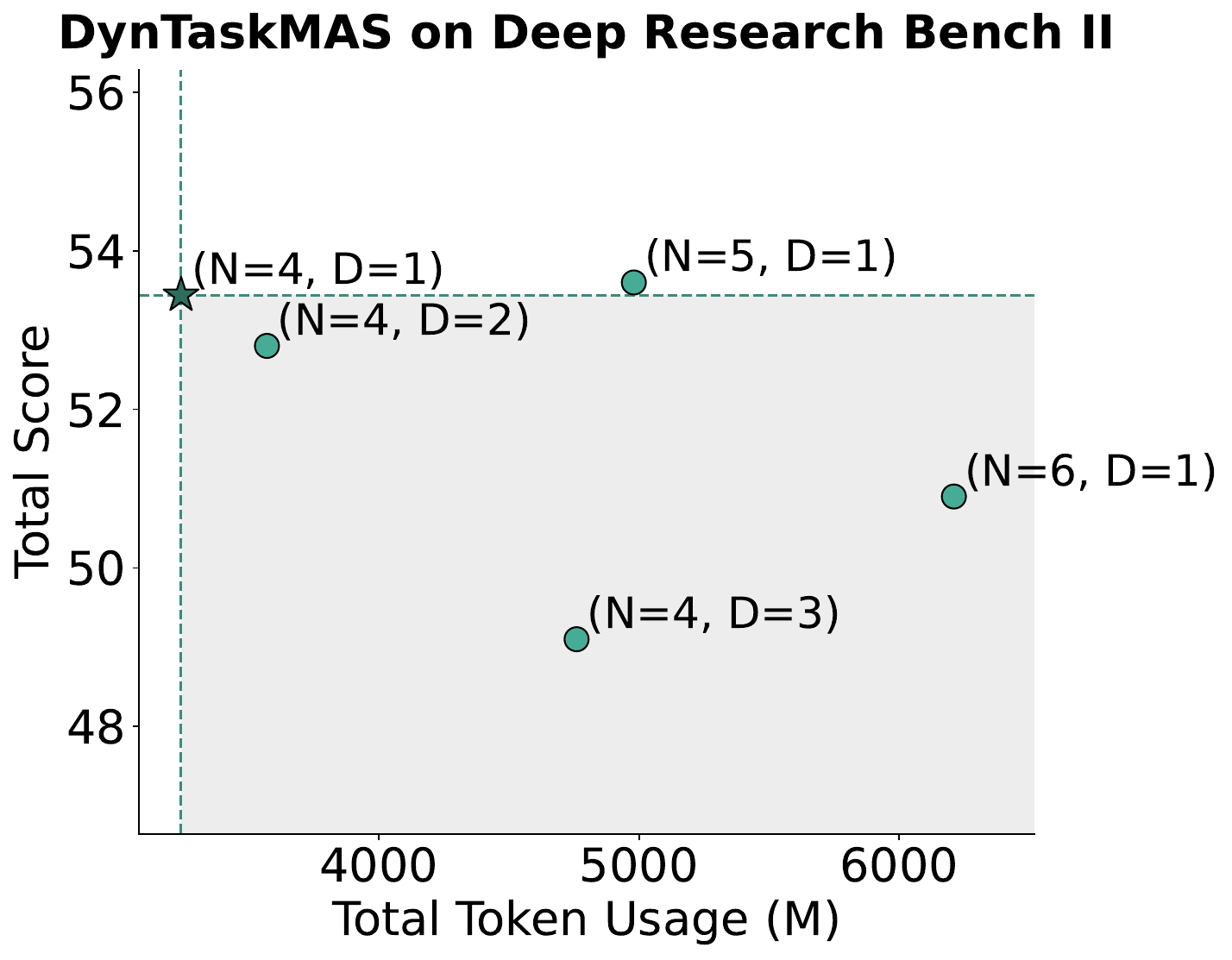}
  \label{fig:ablation_dyntaskmas_drbii}
\end{minipage}

\vspace{1em}

\begin{minipage}{0.45\textwidth}
  \centering
  \includegraphics[width=\linewidth]{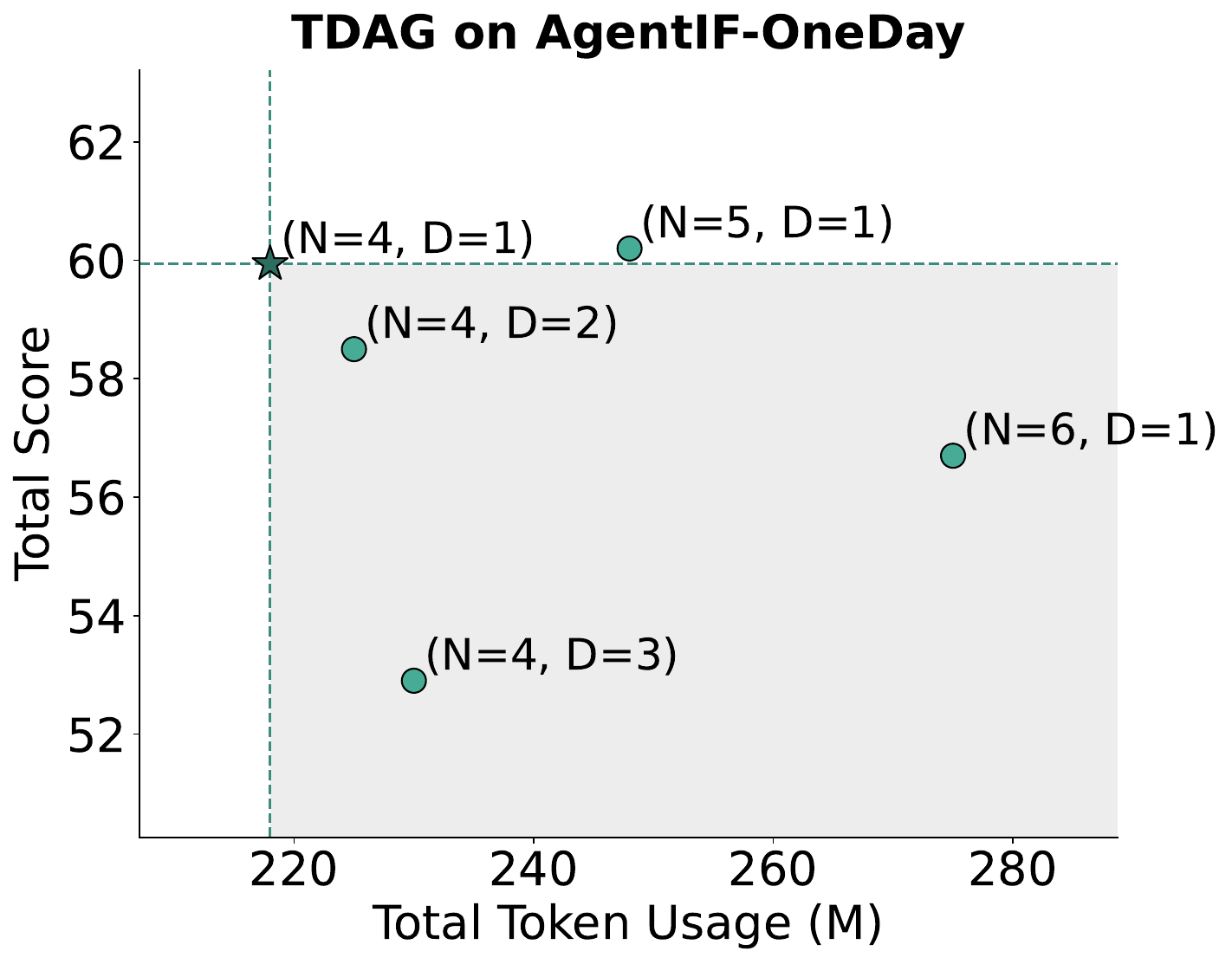}
  \label{fig:ablation_tdag_agentif}
\end{minipage}
\hfill
\begin{minipage}{0.45\textwidth}
  \centering
  \includegraphics[width=\linewidth]{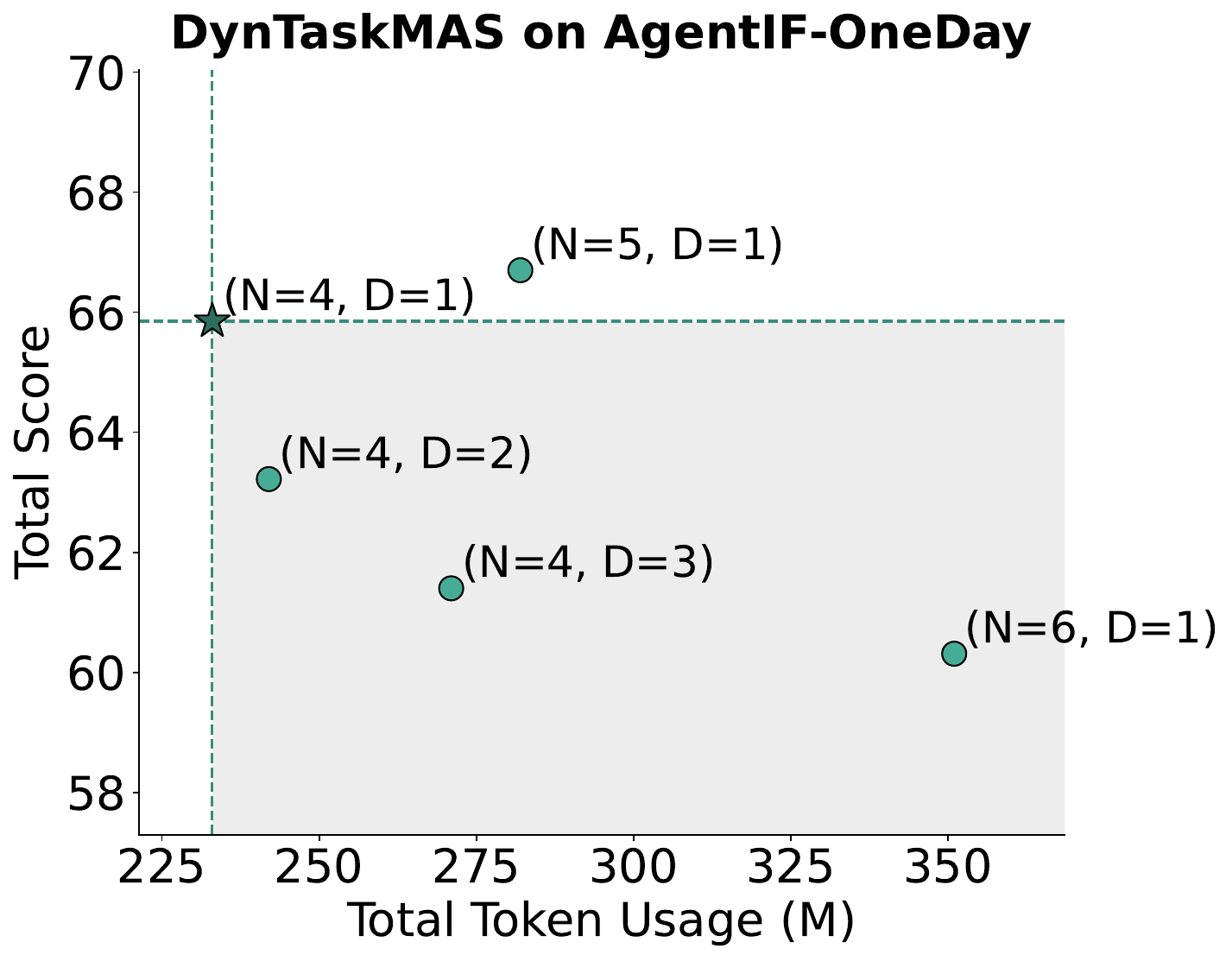}
  \label{fig:ablation_dyntaskmas_agentif}
\end{minipage}
\caption{Ablation results under five configurations. Each point corresponds to one configuration.}
\label{fig:ablation_all}
\end{figure*}

\subsubsection{Fine-Grained Analysis}
\label{app:fine_grained}

To examine whether SAIGE performs conditional subtask decomposition according to task dependency structure rather than spawning agents indiscriminately, we record the average number of agents instantiated per task on each benchmark. We further evaluate whether the predicted bridge edges correspond to the ground-truth dependency boundaries.

\begin{figure*}[h]
\centering
\begin{minipage}{0.48\textwidth}
  \centering
  \includegraphics[width=\linewidth]{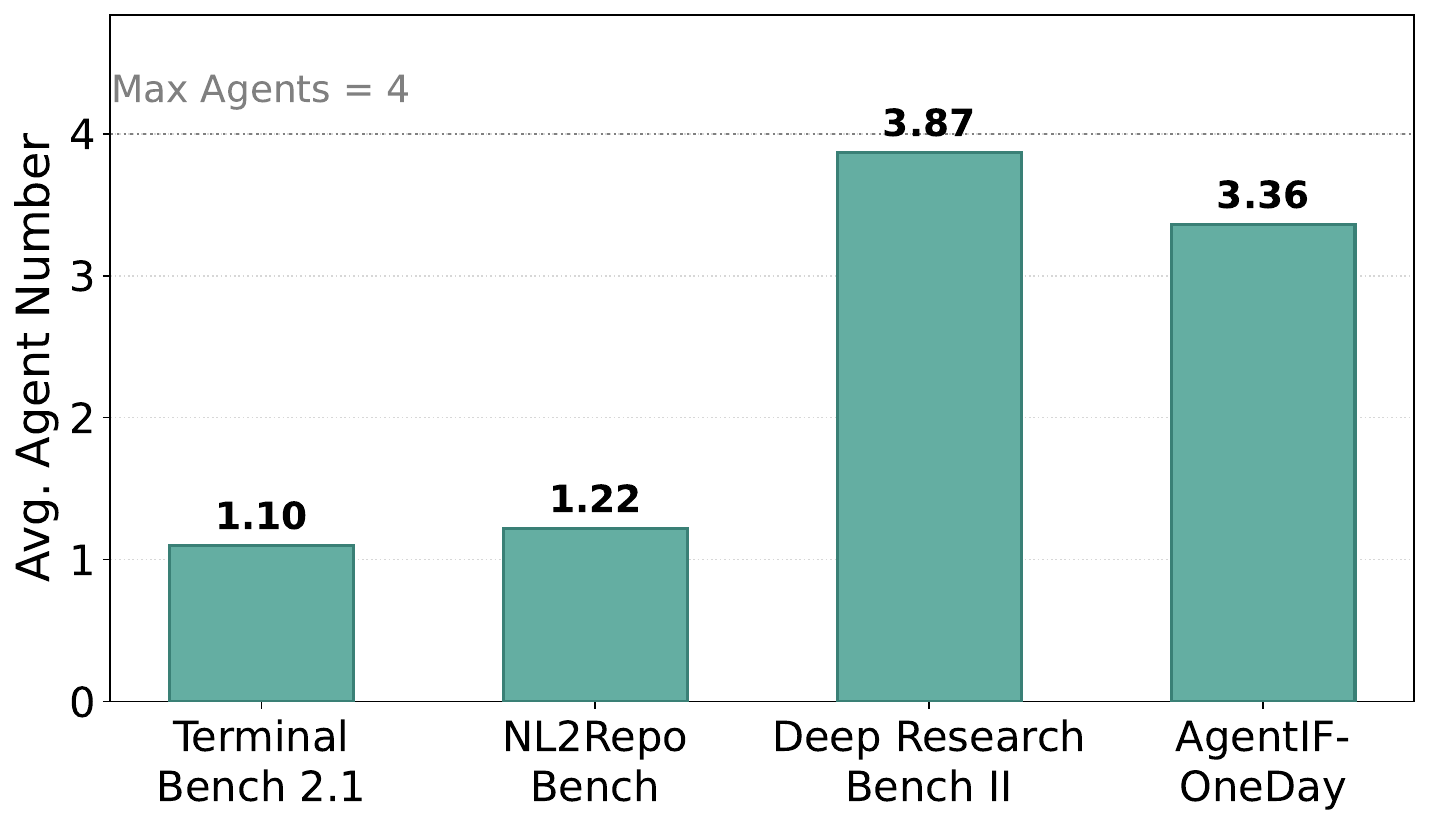}
  \caption{Average agent number per task across four benchmarks.}
  \label{fig:avg_agents}
\end{minipage}
\hfill
\begin{minipage}{0.48\textwidth}
  \centering
  \includegraphics[width=\linewidth]{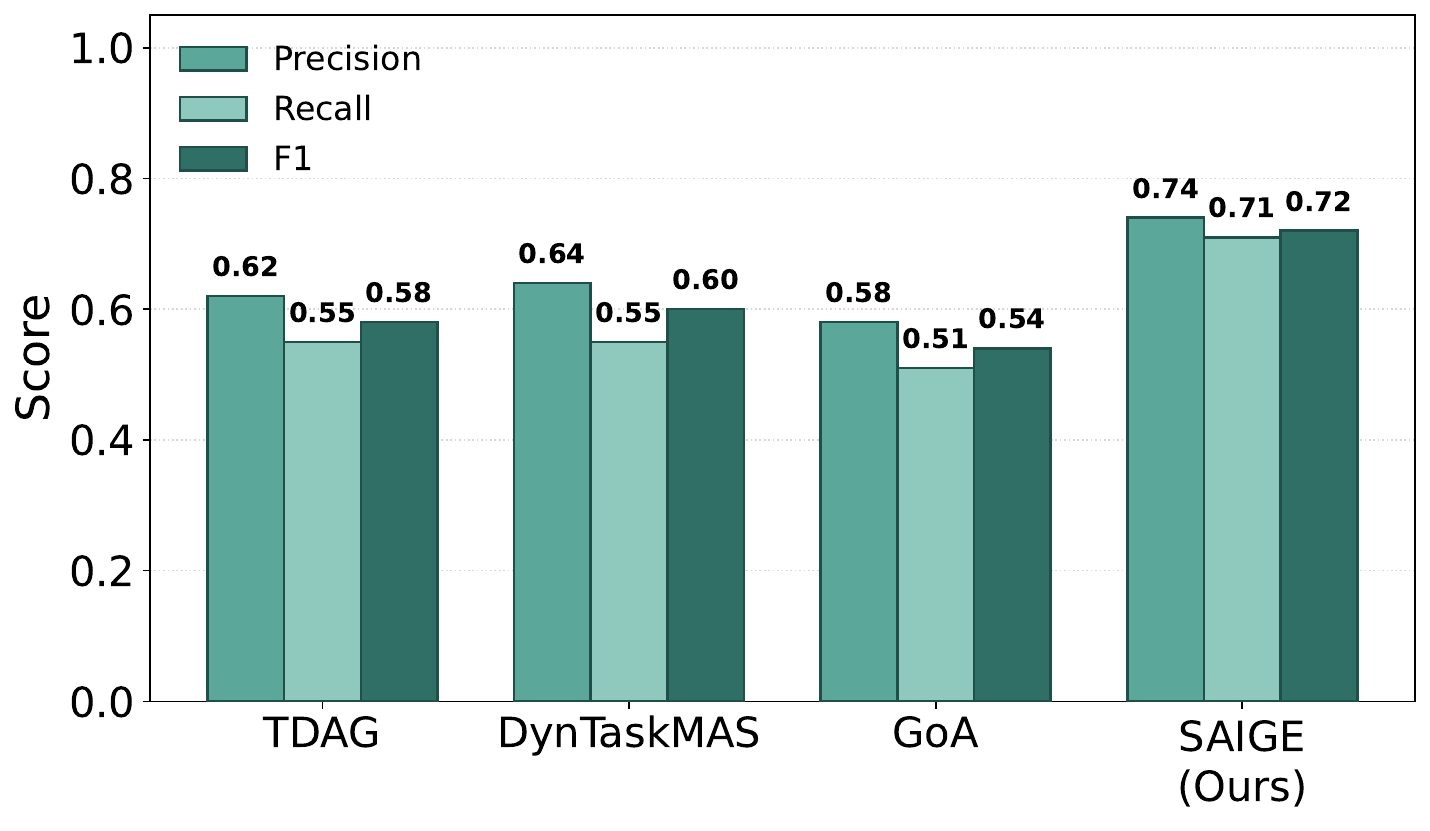}
  \caption{Bridge-edge decomposition accuracy on AgentIF-OneDay.}
  \label{fig:bridge_accuracy}
\end{minipage}
\end{figure*}

As shown in Figure~\ref{fig:avg_agents}, SAIGE instantiates substantially fewer agents on Terminal Bench 2.1 and NL2Repo Bench, where the trajectory exhibits dense inter-step dependencies and most subtasks cannot be executed independently. On Deep Research Bench II and AgentIF-OneDay, whose trajectories are sparsely dependent, SAIGE spawns more agents and exploits context isolation more aggressively. The agent count therefore tracks the dependency structure of the benchmark rather than a fixed allocation policy, which is consistent with the conditional delegation criterion. Notably, the average agent number remains below the maximum budget on all four benchmarks, indicating that SAIGE does not expand its collaboration graph beyond what the task structure permits.

To obtain a ground-truth decomposition for AgentIF-OneDay, we first collect the successful trajectories produced by the Single-Agent baseline on this benchmark. For each such trajectory, we construct a semantic dependency matrix with DeepSeek-V4-Pro: for every step $t$ and each earlier step $i<t$, the model judges whether step $t$ semantically conditions on the output of step $i$, i.e., whether the action or observation at step $t$ would change if $x_i$ were removed from the history. This yields a binary dependency matrix. Bridge edges are then identified as the single-step dependencies whose removal partitions the trajectory into weakly connected components, and the resulting components define the ground-truth decomposition points at which the task can be split into independently executable subtasks.

We then evaluate the decomposition produced by each multi-agent method against this ground truth. For every sub-agent trajectory generated by a method, we semantically match its instruction and executed steps to the ground-truth subtask components using DeepSeek-V4-Pro, and count a predicted bridge edge as correct if it connects the same pair of subtask components as a ground-truth bridge edge. Precision, recall, and F1 are computed over the matched bridge edges across all tasks.

Figure~\ref{fig:bridge_accuracy} reports the results on AgentIF-OneDay. SAIGE achieves the most accurate decomposition among all multi-agent methods, indicating that its incremental, execution-grounded edge prediction recovers the true dependency boundaries more faithfully than static or pre-committed decomposition strategies. The improvement is most pronounced in recall, suggesting that SAIGE is less prone to omitting bridge edges, which is precisely the failure mode that produces local deadlocks. These results provide direct evidence that the performance gains of SAIGE on sparsely dependent benchmarks are accompanied by, and plausibly attributable to, more accurate bridge-edge prediction.

\section{Case Study}
\label{app:case_study}

\subsection{Task Dependency Patterns}

To illustrate the task dependency patterns that underlie our analysis, we select one representative single-agent trajectory from Terminal Bench 2.1 and one from AgentIF-OneDay, and construct their dependency matrices through semantic dependency analysis. Specifically, for each step $t$ in the trajectory, we take the history $\mathcal{H}_t = \{x_1, \ldots, x_{t-1}\}$ and perform semantic analysis with DeepSeek-V4-Pro to assess, for each earlier step $i < t$, whether step $t$ semantically conditions on the output of step $i$, i.e., whether the action or observation at step $t$ would change if $x_i$ were removed from the history. This yields a binary dependency matrix $A \in \{0, 1\}^{T \times T}$ with $A_{t,i} = 1$ if step $t$ depends on step $i$, and $A_{t,i} = 0$ otherwise.

\begin{figure*}[h]
\centering
\begin{minipage}{0.42\textwidth}
  \centering
  \includegraphics[width=\linewidth]{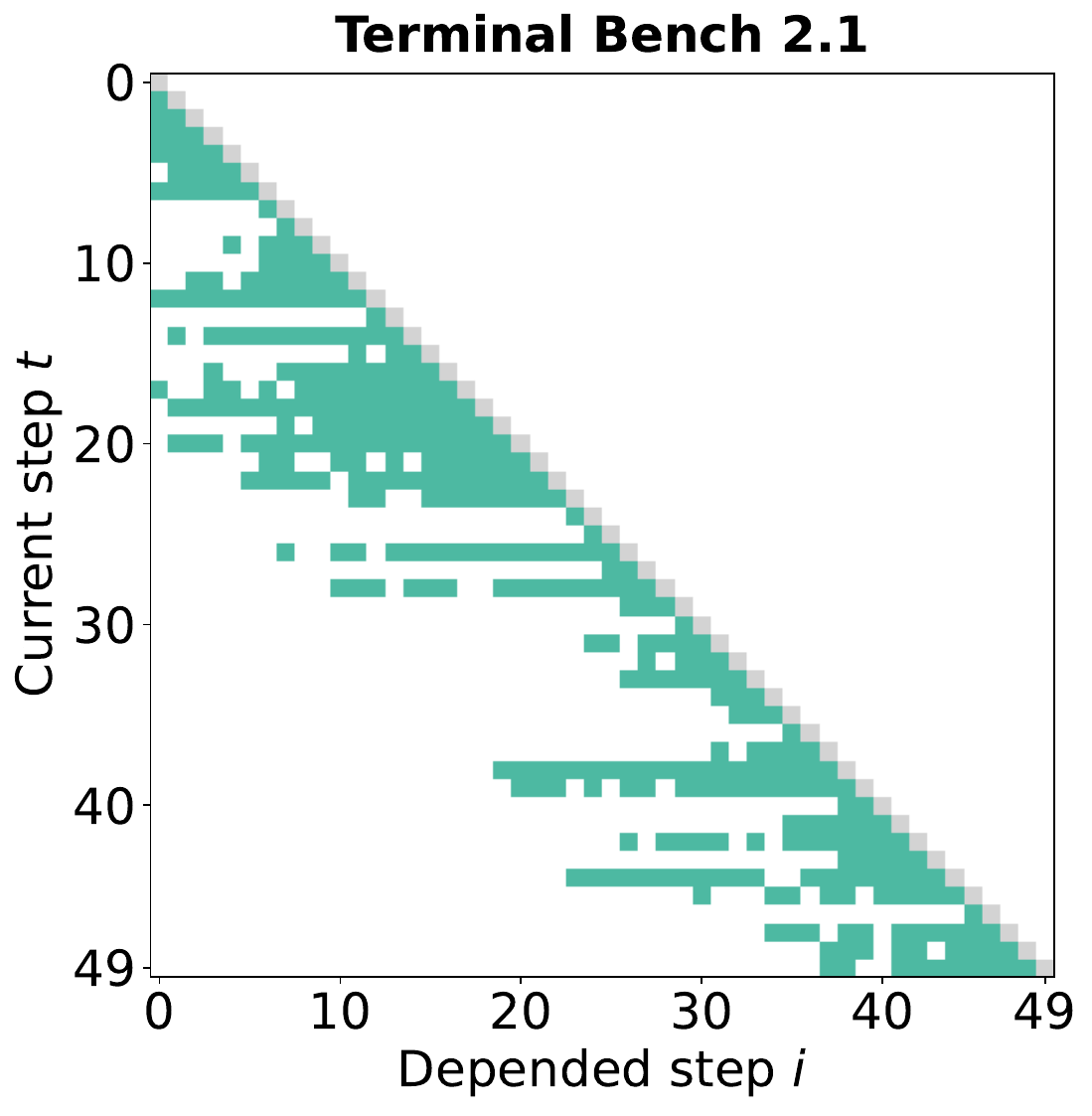}
\end{minipage}
\hfill
\begin{minipage}{0.42\textwidth}
  \centering
  \includegraphics[width=\linewidth]{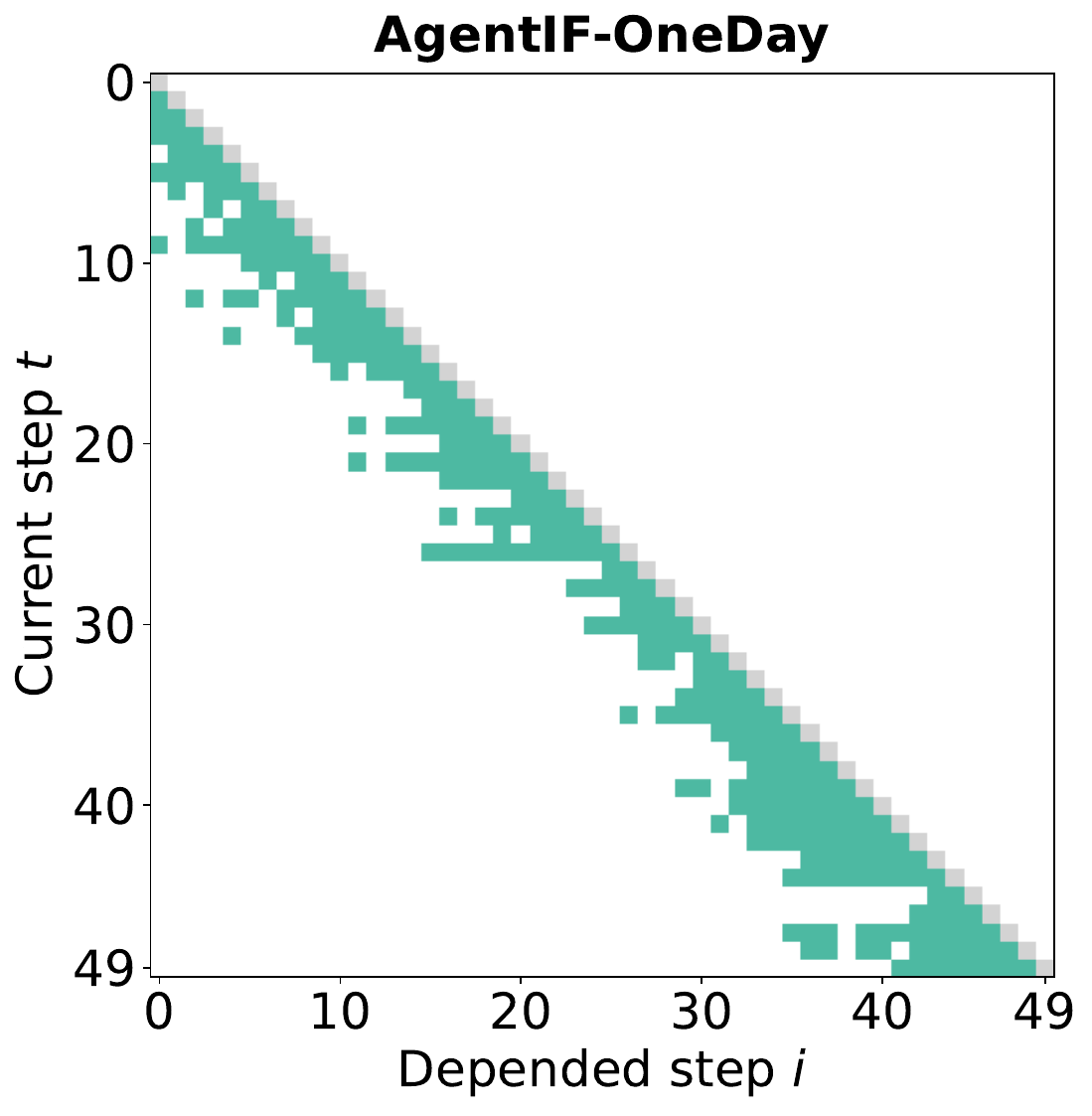}
\end{minipage}
\caption{Dependency matrices of representative single-agent trajectories.}
\label{fig:dependency_matrices}
\end{figure*}

Figure~\ref{fig:dependency_matrices} shows the resulting dependency matrices for the two trajectories. The Terminal Bench 2.1 trajectory exhibits a dense dependency pattern in which a non-negligible fraction of steps depend on distant predecessors, producing long-range dependencies that span a wide range of earlier steps. This is consistent with the long upstream-downstream chains induced by continuous tool invocations, file reads, and file writes, where later steps must consume the outputs of earlier ones and independent execution is rarely possible. The AgentIF-OneDay trajectory, by contrast, exhibits a sparse and localized pattern: non-zero entries concentrate near the diagonal, indicating that most steps depend only on their immediate predecessors and that large portions of the trajectory can proceed without conditioning on one another. This contrast directly reflects the two structural regimes identified in our analysis, and provides a step-level view of why multi-agent decomposition is constrained on tightly coupled benchmarks while remaining feasible on sparsely dependent ones.

\subsection{Bridge Edge Mis-Prediction: A Local Deadlock Case}

We illustrate the effect of bridge edge mis-prediction on a representative task from Terminal Bench 2.1, \texttt{make-doom-for-mips}, in which the agent must build a MIPS ELF that runs under a supplied JavaScript MIPS VM. The task contains a long upstream-downstream chain: the VM defines the syscall ABI and the memory layout, the build must target a compatible MIPS ABI, and the resulting ELF must be validated by running it inside the VM. A correct decomposition must preserve the bridge edges that connect these stages.

\begin{figure}[h]
\centering
\begin{minipage}[t]{0.48\linewidth}
\small
\begin{verbatim}
// Ground-truth dependency structure
1  inspect vm.js
2      -> identify syscall ABI
3      -> identify memory layout
4  inspect doomgeneric/
5      -> identify build targets
6  configure MIPS toolchain
7      -> select ABI / soft-float
8  build doomgeneric_mips
9  run under vm.js
10     -> observe failure
11 -> adjust build flags
12 -> rebuild
13 -> validate execution
\end{verbatim}
\end{minipage}
\hfill
\begin{minipage}[t]{0.48\linewidth}
\small
\begin{verbatim}
// Predicted dependency structure
1  inspect vm.js
2      -> identify syscall ABI
3  inspect doomgeneric/
4      -> identify build targets
5  configure MIPS toolchain
6      -> select ABI / soft-float
7  build doomgeneric_mips
8  run under vm.js
9      -> observe failure
10 -> adjust build flags
11 -> rebuild
12 -> validate execution
\end{verbatim}
\end{minipage}
\caption{Ground-truth and mis-predicted dependency structures on \texttt{make-doom-for-mips}.}
\label{fig:bridge_case}
\end{figure}

The error is one omitted bridge edge: the dependency from VM memory layout (step 3) to build configuration (step 6) in the ground-truth dependency structure. In the ground-truth structure, this edge carries the constraint that determines the correct ABI and memory model. In the predicted structure, the edge is absent.

This omission produces a local deadlock. The build agent configures the MIPS toolchain without the VM memory layout, while the validation agent repeatedly rebuilds and waits for a build that satisfies the omitted constraint. Neither agent can break the cycle: the build agent adjusts flags that are not the actual bottleneck, and the validation agent keeps issuing rebuild-and-run cycles.

Two costs follow. Communication overhead rises because repeated build-and-run messages do not converge, and redundant execution accumulates because each cycle reruns the same build and the same VM validation.

If the bridge edge is predicted correctly, the build configuration is conditioned on the VM memory layout and syscall ABI, and the correct ABI and soft-float configuration are selected before the first build. The validation stage then observes a consistent ELF, and the trajectory terminates after a small number of build-and-run cycles. This contrast shows why orchestration is naturally formulated as bridge edge prediction: an omitted bridge edge does not merely reduce context cost reduction, but can turn a convergent workflow into a local deadlock.

\section{Impact Statement}

This work studies when multi-agent collaboration is beneficial for long-horizon agentic tasks and when it is not. Its primary impact is conceptual and practical: it clarifies that multi-agent superiority is bounded by task structure rather than universal, and that blindly scaling the number of agents can increase coordination overhead without improving task performance. This message is relevant to practitioners who build agentic systems, since it suggests that collaboration should be introduced selectively, based on the dependency structure of the target task, rather than adopted as a default design choice.

On the positive side, the proposed SAIGE mechanism grows its collaboration graph in response to execution feedback and establishes semantic dependencies through content-based information retrieval. This may help reduce unnecessary context accumulation and improve the reliability of long-horizon execution, which is relevant to applications such as software engineering agents, repository-level code generation, and deep research assistants. By making the cost-performance trade-off of multi-agent decomposition explicit, our analysis may also encourage more disciplined evaluation and reporting practices in this area.

On the risk side, more capable and autonomous agentic systems can be misused for harmful purposes, including automated exploitation of software vulnerabilities, unauthorized access to computing resources, or large-scale generation of misleading content. The benchmarks used in this work are designed for research evaluation and operate in controlled environments, but the underlying techniques are not inherently restricted to benign settings. Multi-agent decomposition may also introduce failure modes that are harder to audit, such as local deadlocks, mispredicted dependencies, and opaque coordination among agents, which can complicate debugging, accountability, and oversight.

\end{document}